\newtheorem{theorem}{Theorem}
\newmdenv[innerlinewidth=0.5pt, roundcorner=4pt,innerleftmargin=6pt,
innerrightmargin=6pt,innertopmargin=6pt,innerbottommargin=6pt]{mybox}
\def\expandafter\UrlBreaks\expandafter{\UrlBreaks
  \do\a\do\b\do\c\do\d\do\e\do\f\do\g\do\h\do\i\do\j%
  \do\k\do\l\do\m\do\n\do\o\do\p\do\q\do\r\do\s\do\t%
  \do\u\do\v\do\w\do\x\do\y\do\z\do\A\do\B\do\C\do\D%
  \do\E\do\F\do\G\do\H\do\I\do\J\do\K\do\L\do\M\do\N%
  \do\O\do\P\do\Q\do\R\do\S\do\T\do\U\do\V\do\W\do\X%
  \do\Y\do\Z}
\newcommand{\oracle}[0]{exhaustive}
\newcommand{\Oracle}[0]{Exhaustive}
\newcommand{\Oracleshort}[0]{Exhaustive}
\newcommand{\nnmodel}[0]{F}
\newcommand{\params}[0]{\theta}
\newcommand{\perturbe}[0]{R}
\newcommand{\Tinsword}{T_{\emph{Dup}}}
\newcommand{\Tdelword}{T_{\emph{DelStop}}}
\newcommand{\Tsubword}{T_{\emph{SubSyn}}}
\newcommand{\Tsubchar}{T_{\emph{SubAdj}}}
\newcommand{\Tswapchar}{T_{\emph{SwapPair}}}
\newcommand{\Tinschar}{T_{\emph{InsAdj}}}
\newcommand{\Tdelchar}{T_{\emph{Del}}}
\newcommand{\aat}{A3T\xspace}
\DeclareMathOperator{\bfx}{{\bf x}}
\DeclareMathOperator{\bfy}{{\bf y}}
\DeclareMathOperator{\bfz}{{\bf z}}
\DeclareMathOperator{\bfs}{{\bf s}}
\DeclareMathOperator{\bfv}{{\bf v}}
\DeclareMathOperator*{\argmin}{{\rm argmin}}
\DeclareMathOperator{\calD}{{\mathcal D}}
\newcommand{\dataset}{D}
\DeclareMathOperator{\calX}{{\mathcal X}}
\DeclareMathOperator{\calY}{{\mathcal Y}}
\DeclareMathOperator{\alphabet}{\Sigma}
\newcommand{\spec}{S}
\newcommand{\that}{\widehat{T}}
\theoremstyle{definition}
\newtheorem{definition}{Definition}
\newtheorem{example}{Example}
\newtheorem{lemma}[theorem]{Lemma}
\newcommand{\advab}{\aat{}(HotFlip)}
\newcommand{\enumab}{\aat{}(search)}
\newcommand{\lengththreecell}{4.5cm}
\newcommand{\lengthcell}{1.05cm}
\newcommand{\lengthcellshorter}{0.8cm}
\newcommand{\lengthcelllonger}{1.75cm}
\renewcommand{\leq}{\leqslant}
\newcommand{\perturbedsample}{perturbed sample}
\newcommand{\perturbed}[1]{{\color{red}{#1}}}
\newcommand{\yhmodify}[2]{{#2}}
\icmltitlerunning{Robustness to String Transformations via Augmented Abstract Training}
\begin{document}

\twocolumn[
\icmltitle{Robustness to Programmable String Transformations \\ via Augmented Abstract Training}



\icmlsetsymbol{equal}{*}

\begin{icmlauthorlist}

\icmlauthor{Yuhao Zhang}{csuwm}
\icmlauthor{Aws Albarghouthi}{csuwm}
\icmlauthor{Loris D'Antoni}{csuwm}
\end{icmlauthorlist}

\icmlaffiliation{csuwm}{Department of Computer Science, University of Wisconsin-Madison, Madison, USA}

\icmlcorrespondingauthor{Yuhao Zhang}{yuhaoz@cs.wisc.edu}

\icmlkeywords{Machine Learning, ICML}

\vskip 0.3in
]



\printAffiliationsAndNotice{}  

\begin{abstract}
Deep neural networks for natural language processing tasks are vulnerable to adversarial input perturbations.
In this paper, we present a versatile language for programmatically specifying string transformations---e.g., insertions, deletions, substitutions, swaps, etc.---that are relevant to the task at hand.
We then present an approach to adversarially training models that are robust to such user-defined string transformations. 
Our approach combines the advantages of search-based techniques for adversarial training with abstraction-based techniques.
Specifically, we show how to decompose a set of user-defined string transformations into two component specifications, one that benefits from search and another from abstraction. 
We use our technique to train models on the AG and SST2 datasets and show that the resulting models are robust to  combinations of user-defined transformations mimicking spelling mistakes and other meaning-preserving transformations.
\end{abstract}

\section{Introduction} \label{sec:intro}
Deep neural networks have proven incredibly powerful in a huge range of machine-learning tasks.
However, deep neural networks are highly sensitive to small input perturbations that cause the network's accuracy to plummet~\cite{DBLP:conf/ccs/Carlini017,szegedy2013intriguing}.
In the context of natural language processing, these \emph{adversarial examples} come in the form of spelling mistakes, use of synonyms, etc.---essentially, meaning-preserving transformations that cause the network to change its prediction~\cite{DBLP:conf/acl/EbrahimiRLD18, DBLP:conf/acl/ZhangZML19, DBLP:journals/corr/abs-1903-06620}.

In this paper, we are interested in the problem of training models over natural language---or, generally, sequences over a finite alphabet---that are \emph{robust} to adversarial examples.
Sequences over finite alphabets are unique in that the space of adversarial examples is discrete and therefore hard to explore efficiently using gradient-based optimization as in the computer-vision setting.
The common approach to achieving robustness is \emph{adversarial training}~\cite{DBLP:journals/corr/GoodfellowSS14,DBLP:conf/iclr/MadryMSTV18}, which has seen a great deal of research in computer vision and, more recently, in natural language processing~\cite{DBLP:conf/acl/EbrahimiRLD18, DBLP:conf/acl/ZhangZML19, DBLP:journals/corr/abs-1903-06620}.
Suppose we have defined a space of perturbations $\perturbe{}(x)$ of a sample $x$---e.g.,
if $x$ is a sentence, $\perturbe{}(x)$ contains every possible misspelling of words in $x$, up to some bound on the number of misspellings.
The idea of adversarial training is to model an adversary within the training objective function:
Instead of computing the loss for a sample $(x,y)$ from the dataset, we compute the loss for the worst-case \perturbedsample{} $z \in \perturbe{}(x)$.
Formally, the adversarial loss for $(x,y)$ is $\max_{z\in \perturbe{}(x)} \mathcal{L}(z,y,\theta)$.

The question we ask in this paper is:
\begin{center}
\emph{Can we train models that are robust against rich perturbation spaces over strings?}
\end{center}
The practical challenge in answering this question is computing the worst-case loss.
This is because the perturbation space $\perturbe{}(x)$ can be enormous and therefore impractical to enumerate.
This is particularly true for NLP tasks, where the perturbation space $\perturbe{}(x)$ should contain inputs that are semantically equivalent to $x$---e.g., variations of the sentence $x$ with typos or words replaced by synonyms.
Therefore, we need to \emph{approximate} the adversarial loss.
There are two such classes of approximation techniques:
\begin{description}
\item[Augmentation] The first class of techniques computes a \emph{lower bound} on the adversarial loss by exploring a finite number of points in $\perturbe{}(x)$. This is usually done by applying a gradient-based attack, like HotFlip~\cite{DBLP:conf/acl/EbrahimiRLD18} for natural-language tasks or PGD~\cite{DBLP:conf/iclr/MadryMSTV18} for computer-vision tasks.
We call this class of techniques \emph{augmentation}-based, as they essentially search for a \perturbedsample{} with which to augment the training set.
\item[Abstraction] The second class of techniques computes an \emph{upper bound} on the adversarial loss by overapproximating, or \emph{abstracting}, the perturbation space $\perturbe{}(x)$ into a set of symbolic constraints that can be efficiently propagated through the network.
For example, the \emph{interval} abstraction has been used in numerous works~\cite{diffai, DBLP:conf/iccv/GowalDSBQUAMK19,DBLP:conf/emnlp/HuangSWDYGDK19}.
We call this class of techniques \emph{abstraction}-based.
\end{description}

Both classes of techniques can produce suboptimal results:
augmentation can severely underapproximate the worst-case loss and abstraction can severely overapproximate the loss.
Particularly, we observe that the two techniques have complementary utility, working well on some perturbation spaces but not others---for example, \citet{DBLP:conf/emnlp/HuangSWDYGDK19}
have shown that abstraction works better for token substitutions,
while augmentation-based techniques like HotFlip~\cite{DBLP:conf/acl/EbrahimiRLD18} and MHA~\cite{DBLP:conf/acl/ZhangZML19} are general---e.g., apply to token deletions and insertions.

\begin{figure}[t]
    \centering
    \includegraphics[scale=1.25]{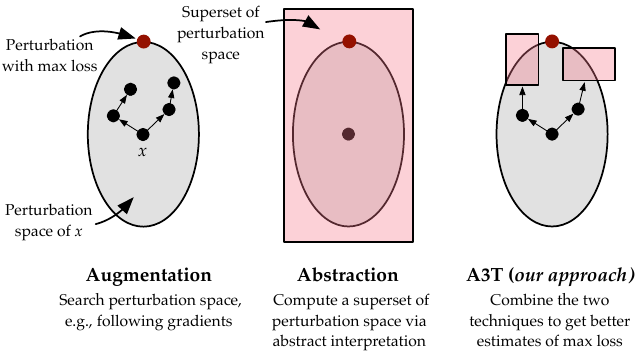}
    \caption{Illustration of augmentation, abstraction, and \aat}
    \label{fig:overview}
\end{figure}

\subsection{Our Approach}
\textbf{A hybrid approach}
We propose \emph{augmented abstract adversarial training} (\aat), an adversarial training technique that combines the strengths of augmentation and abstraction techniques.
The key idea underlying \aat is to decompose the perturbation space into two subsets, one that can be explored using augmentation and one that can be abstracted---e.g., using augmentation to explore word duplication typos and abstraction to explore replacing words with synonyms.
From an algorithmic perspective, our computation of adversarial loss switches from a concrete, e.g., gradient-based, search through the perturbation space to a symbolic search.
As such, for every training sample $(x,y)$,
our technique may end up with a lower bound \emph{or} an upper bound on its adversarial loss (see \cref{fig:overview}).

\textbf{A language for specifying string transformations}
The challenge of applying \aat is how to exactly decompose the perturbation space.
Our key enabling idea is to define the perturbation space \emph{programmatically},
in a way that can be easily and cleanly decomposed.
Specifically, we define a general language in which we can specify a perturbation space by a specification $\spec$ in the form of $\{(T_1,\delta_1), \ldots,(T_n, \delta_n)\}$, containing a set of string transformations $T_i:\mathcal{X} \to 2^\mathcal{X}$.
The specification $\spec{}$ defines a perturbation space of all possible strings by applying each transformation $T_i$ up to $\delta_i$ times.
For example, given a string $x$, $S(x)$ could define the set of all strings $x'$ that are like $x$ but with some words replaced by one of its synonyms and with some stop words removed.

Given a perturbation space defined by a set of transformations, \aat decomposes the set of transformations into two disjoint subsets, one that is explored concretely (augmentation) and one that is explored symbolically (abstraction).

\textbf{Results}
We have implemented \aat and used it to train NLP models for sentiment analysis
that are robust to a range of string transformations---e.g., 
character swaps modeling spelling mistakes,
substituting of a word with a synonym, removing stop words, duplicating words, etc. 
Our results show that \aat can train models that are more robust to adversarial string transformations than those produced using existing techniques.

\subsection{Summary of Contributions}
\begin{itemize}
    \item We present \aat, a technique for training models that are robust to string transformations. \aat combines search-based attacks and abstraction-based techniques to explore the perturbation space and
    compute good approximations of adversarial loss. 
    \item To enable \aat, we define a general language of string transformations with which we can specify the perturbation space. \aat exploits the specification to decompose and search the perturbation space.
    \item We implement \aat\footnote{We provide our code at \url{https://github.com/ForeverZyh/A3T}.} and evaluate it on two datasets and a variety of string transformations. Our results demonstrate the increase in robustness achieved by \aat in comparison with state-of-the-art techniques.
\end{itemize}

\section{Related Work}
\textbf{Adversarial text generation}
\citet{zhang2019adversarial} presented a comprehensive overview of adversarial attacks on neural networks over natural language.
In this paper, we focus on the word- and character-level.
HotFlip~\cite{DBLP:conf/acl/EbrahimiRLD18} is a gradient-based approach that can generate the adversarial text in the perturbation space described by word- and character-level transformations. 
 MHA~\cite{DBLP:conf/acl/ZhangZML19} uses Metropolis-Hastings sampling guided by gradients to generate word-level adversarial text via word substitution.
\citet{DBLP:conf/aclnut/KarpukhinLEG19} designed character-level noise for training robust machine translation models.
Other work focuses on generating adversarial text on the sentence-level~\cite{DBLP:conf/ijcai/0002LSBLS18} or paraphrase-level~\cite{DBLP:conf/naacl/IyyerWGZ18,DBLP:conf/acl/SinghGR18}.
Also, some works~\cite{DBLP:conf/iclr/ZhaoDS18,DBLP:journals/corr/abs-1912-10375} try to generate natural or grammatically correct adversarial text.

\textbf{Abstract training}
\citet{diffai}
and \citet{DBLP:conf/iccv/GowalDSBQUAMK19} first proposed \textit{DiffAI} and interval bound propagation (IBP) to train image classification models that are provably robust to norm-bounded adversarial perturbations.
They performed abstract training by optimizing the abstract loss obtained by Interval or Zonotope propagation.
\yhmodify{}{
\citet{DBLP:journals/corr/abs-1903-12519} used DiffAI to defend deep residual networks provably.
\citet{DBLP:conf/emnlp/JiaRGL19} used interval domain to capture the perturbation space of substitution and train robust models for CNN and LSTM.
}
\citet{DBLP:conf/emnlp/HuangSWDYGDK19} proposed a simplex space to capture the perturbation space of substitution.
They converted the simplex into intervals after the first layer of the neural network and obtained the abstract loss by IBP.
We adopt their abstract training approach for some of our transformations.
We will show the limitation of abstract training for more complex perturbations like the combination of swap and substitution.

\textbf{Other robustness techniques}
Other techniques to ensure robustness involve placing a spelling-mistake-detection model that identifies possible adversaries before the underlying model \cite{DBLP:conf/acl/PruthiDL19, DBLP:conf/aaai/SakaguchiDPD17}.

\textbf{Formal verification for neural networks}
In the verification for NLP tasks,
\citet{shi2020robustness} combined forward propagation and a tighter backward bounding process to achieve the formal verification of Transformers.
\citet{welbl2020towards} proposed the formal verification under text deletion for models based on the popular decomposable attention mechanism by interval bound propagation.
\textit{POPQORN}~\cite{DBLP:conf/icml/KoLWDWL19} is a general algorithm to quantify the robustness of recurrent neural networks, including RNNs, LSTMs, and GRUs.
\yhmodify{}{
COLT~\cite{DBLP:conf/iclr/BalunovicV20} combines formal verification and adversarial training to train neural networks.
}
In this paper, we mix verification techniques, namely, interval propagation, with search-based techniques.

\section{The Perturbation-Robustness Problem}

In this section, we (1) formalize the perturbation-robustness problem
and (2) define a string transformation language for specifying the perturbation space.

\subsection{Perturbation Robustness}
\textbf{Classification setting}
We consider a standard classification setting with samples from some domain $\calX$ and labels from $\calY$.
Given a distribution $\calD$ over samples and labels, our goal is to find the optimal parameters
$\params$ of some neural-network architecture $\nnmodel_\params$ that minimize the expected loss
\begin{align}
    \argmin_{\params} \mathop{\mathbb{E}}_{(\bfx,y)\sim \mathcal{D}} \mathcal{L}(\bfx,y,\params) \label{eq:normal}
\end{align}

We are interested in the setting where the sample space $\calX$ defines strings over some finite \emph{alphabet} $\alphabet$. The alphabet $\alphabet$ can be, for example, English characters (in a character-level model) or entire words (in a world-level model).
Therefore, the domain $\calX$ in our setting is $\alphabet^*$, i.e., the set of all strings of elements of $\alphabet$.
We will use $\bfx \in \alphabet^*$ to denote a string and $x_i \in \alphabet$ to denote the $i$th element of the string.


\textbf{Perturbation space}
We define a \emph{perturbation space} $\perturbe$ as a function in $\alphabet^* \to 2^{\alphabet^*}$, i.e., $\perturbe$ takes a string $\bfx$ and returns a finite set of possible \emph{perturbed} strings of $\bfx$.
%

We will use a perturbation space to denote a set of strings that should receive the same classification by our network.
For example, $\perturbe(\bfx)$ could define a set of sentences paraphrasing $\bfx$.
We can thus modify our training objective into a \emph{robust-optimization} problem, following
\citet{DBLP:conf/iclr/MadryMSTV18}:
\begin{align}\argmin_{\params} \mathop{\mathbb{E}}_{(\bfx,y)\sim \mathcal{D}} \; \max_{\bfz\in\perturbe(\bfx)} \; \mathcal{L}(\bfz,y,\params) \label{eq:advtrain}
\end{align}
This inner objective is usually hard to solve; in our setting, the perturbation space can be very large and we cannot afford to consider every single point in that space during training. 
Therefore, as we discussed in \cref{sec:intro}, typically approximations are made.

\textbf{\Oracle{} accuracy}
Once we have trained a model $\nnmodel_\params$ using the robust optimization objective, we will use \emph{\oracle{} accuracy} to quantify its classification accuracy in the face of perturbations.
Specifically, given a dataset $\dataset = \{(\bfx_i,y_i)\}_{i=1}^n$
and a perturbation space $\perturbe$, we define \oracle{} accuracy as follows:
\begin{align}
    \frac{1}{n} \sum_{i=1}^n \mathds{1}[\forall \bfz \in \perturbe(\bfx_i).\ \nnmodel_\params(\bfz)=y_i] \label{eq: exhaustiveacc}
\end{align}
Intuitively, for each sample $(\bfx_i,y_i)$, its classification is considered correct iff $\nnmodel_\params$ predicts $y_i$ for every single point in $\perturbe(\bfx_i)$.
We use \oracle{} accuracy instead of the commonly used adversarial accuracy because 
(1) \oracle{} accuracy provides the ground truth accuracy of the discrete perturbation spaces and does not depend on an underlying adversarial attack,
and (2) the discrete spaces make it easy for us to compute \oracle{} accuracy by enumeration and at the same time hard for the gradient-based adversarial attacks to explore the space.

\subsection{A Language for Specifying Perturbations}
We have thus far assumed that the perturbation space is provided to us. 
We now describe a language for modularly specifying a perturbation space.


A specification  $\spec$ is defined as follows:
$$\spec=\{(T_1, \delta_1), \dots ,(T_n, \delta_n)\}$$
where each $T_i$ denotes a string transformation that can be applied
up to $\delta_i \in \mathbb{N}$ times.
Formally, a string transformation $T$ is a pair $(\varphi, f)$,
where 
$\varphi: \alphabet^* \to \{0,1\}$ is a \textit{Boolean predicate} (in practice, a regular expression) describing the substrings of the inputs to which the transformation can be applied, and
$f: \alphabet^* \to 2^{\alphabet^*}$ is a \emph{transformer} describing
how the substrings matched by $\varphi$ can be replaced.

\textbf{Single transformations}
Before defining the semantics of our specification language, we illustrate a few example specifications involving single transformations:

\begin{example}[$T_\emph{stop}=(\varphi_\emph{stop}, f_\emph{stop})$]
Suppose we want to define a transformation that deletes a stop word---\emph{and}, \emph{the}, \emph{is}, etc.---mimicking a typo.
The predicate $\varphi_\emph{stop}$ will be a regular expression matching all stop words.
The transformer $f_\emph{stop}$ will be simply the function that takes a string and returns the set containing the empty string, $f_\emph{stop}(\bfx) = \{\epsilon\}$.
Consider a specification $\spec_\emph{stop}=\{(T_\emph{stop},1)\}$ that applies
the transformation $T_\emph{stop}$ up to one time.
On the following string,
    \emph{They are at school},
the predicate $\varphi_\emph{stop}$ matches the substrings \emph{are} and \emph{at}. In both cases, we apply the predicate $f_\emph{stop}$ to the matched word and insert the output of $f_\emph{stop}$ in its position.
This results in the set containing
the original string (0 transformations are applied) and the two strings
    \emph{They at school} and
    \textit{They are school}.
Applying a specification $\spec_\emph{stop}^2=\{(T_\emph{stop},2)\}$, which is allowed to apply $T_\emph{stop}$ at most twice, to the same input would result in a set of strings containing the strings above as well as the string \emph{They school}.
\end{example}

\begin{example}[$T_\emph{nice}=(\varphi_\emph{nice}, f_\emph{nice})$]
Say we want to transform occurrences of \emph{nice} into one of its synonyms, \emph{enjoyable} and \emph{pleasant}.
We define the predicate $\varphi_\emph{nice}(\bfx)$ that is true iff $\bfx = \emph{nice}$,
and we define $f_\emph{nice}(\bfx) = \{\emph{enjoyable}, \emph{pleasant}\}$.
Given the string \emph{This is nice!}, it will be transformed into the set \{\emph{This is enjoyable!}, \emph{This is pleasant!}\}.

Applying a specification $\spec_\emph{nice}^2=\{(T_\emph{nice},2)\}$ to the same input would result in the same set of strings above.
Because the predicate $\varphi_\emph{nice}(\bfx)$ only matches the word $\emph{nice}$.
\end{example}

\begin{example}[$T_\emph{swap}=(\varphi_\emph{swap}, f_\emph{swap})$]
Now consider the case where we would like to swap adjacent vowels.
$\varphi_\emph{swap}$ will be defined as the regular expression that matches two adjacent characters that are vowels.
Next, since $\bfx=x_0x_1$ can only have length 2, the transformer $f_\emph{swap}$ will be the swap function $f_\emph{swap}(x_0x_1) = \{x_1x_0\}$.
\end{example}

\textbf{Multiple transformations}
As discussed above, a specification $\spec$ in our language is a set of transformations 
$\{(T_1, \delta_1), \dots , (T_n, \delta_n)\}$
where each $T_i$ is a pair $(\varphi_i, f_i)$.
The formal semantics of our language can be found in the supplementary Appendix.
%
Informally, 
a string $\bfz$ is in the perturbation space $\spec(\bfx)$
if it can be obtained by
(1) finding a set $\sigma$ of \textit{non-overlapping} substrings of $\bfx$ that match the various predicates $\varphi_i$ and such that at most $\delta_i$ substrings in $\sigma$ are matches of $\varphi_i$,
and (2) replacing each substring $\bfx'\in\sigma$ matched by $\varphi_i$ with  a string in $f_i(\bfx')$.
The complexity of the formalization is due to the requirement that
matched substrings should not overlap---this requirement guarantees that each character in the input is only involved in a single transformation and will be useful when formalizing our abstract training approach in \cref{ssec:abstract}.


\begin{example}[Multiple Transformations]
Using the transformations $T_\emph{nice}$ and $T_\emph{swap}$
we can define  the specification 
\[\spec_\emph{ns}=\{(T_\emph{nice}, 1), (T_\emph{swap}, 1)\}\]
Then, $\spec_\emph{ns}(\emph{This house is nice})$ results in the set of strings:

\begin{tabular}{>{\em}c >{\em}c }
This house is nice &
This house is \textbf{enjoyable} \\
This house is \textbf{pleasant} &
This h\textbf{uo}se is nice \\
This h\textbf{uo}se is \textbf{enjoyable} &
This h\textbf{uo}se is \textbf{pleasant}
\end{tabular}

The transformed portions are shown in bold.
Note that we apply \textit{up to}  $1$ of each transformation,
thus we also get the original string.
Also, note that the two transformations cannot modify 
overlapping substrings of the input; for example, $T_\emph{swap}$ did not swap the \emph{ea} in \emph{pleasant}.
\end{example}

\section{Augmented Abstract Adversarial Training}

In this section, we describe our abstract training technique, \aat, which combines augmentation and abstraction.

Recall the adversarial training objective function, \cref{eq:advtrain}.
The difficulty in solving this objective is the inner maximization objective: 
$\max_{\bfz\in\perturbe(\bfx)} \; \mathcal{L}(\bfz,y,\params),
$
where the perturbation space $\perturbe(\bfx)$ can be intractably large to efficiently enumerate, and we therefore have to resort to approximation.
We begin by describing two approximation techniques and then discuss how our approach combines and extends them.

\textbf{Augmentation (search-based) techniques}
We call the first class of techniques \emph{augmentation} techniques, since they search for a worst-case sample in the perturbation space $\perturbe(\bfx)$ with which to augment the dataset.
The na\"ive way is to simply enumerate all points in $\perturbe(\bfx)$---%
our specifications induce a finite perturbation space, by construction.
Unfortunately, this can drastically slow down the training.
For example, suppose $T$ defines a transformation that swaps two adjacent characters.
On a string of length $N$, the specification $(T,2)$ results in $O(N^2)$ transformations.

An efficient alternative, HotFlip, was proposed by \citet{DBLP:conf/acl/EbrahimiRLD18}.
HotFlip efficiently encodes a transformation $T$ as an operation over the embedding vector and \emph{approximates} the worst-case loss using a single forward and backward pass through the network.
To search through a set of transformations, HotFlip employs a beam search of some size $k$ to get the top-$k$ \perturbedsample{s}.
This technique yields a point in $\perturbe(\bfx)$ that may not have the worst-case loss.
Alternatives like MHA~\cite{DBLP:conf/acl/ZhangZML19} can also be used as augmentation techniques.

\textbf{Abstraction techniques}
Abstraction techniques compute an over-approximation of the perturbation space,
as a symbolic set of constraints.
This set of constraints is then propagated through the network, resulting in an upper bound on the worst-case loss.
Specifically, given a transformation $T$, we define a corresponding abstract transformation $\that$
such that for all $\bfx$, the 
constraint $T(\bfx) \subseteq \that(\bfx)$ holds.

Our use of abstraction builds upon the work of \citet{DBLP:conf/emnlp/HuangSWDYGDK19},
which uses an \emph{interval domain} to define $\that(\bfx)$
\yhmodify{---i.e., 
$\that(\bfx)$ is a conjunction of constraints of the form $l_i{\leq} x_i{\leq} u_i$, where $l_i$ and $u_i$ are constants.}{---i.e., $\that(\bfx)$ is a conjunction of constraints on each character.}
We will describe how we generalize their approach in \cref{ssec:abstract};
for now, we assume that we can efficiently overapproximate the worst-case loss for $\that(\bfx)$
by propagating it through the network.

\subsection{\aat: A High-Level View}
The key idea of \aat is to decompose a specification $\spec$ into two sets of transformations,
one containing transformations that can be effectively explored with augmentation and 
one containing transformations that can be precisely abstracted.

\cref{alg:a3t} shows how  
\aat works.
First, we decompose the specification $\spec$ into two subsets of transformations, resulting in two specifications, $\spec_\emph{aug}$ and $\spec_\emph{abs}$.
For $\spec_\emph{aug}$, we apply an augmentation technique, e.g., HotFlip or MHA, to come up with a list of top-$k$ \perturbedsample{s} in the set $\spec_\emph{aug}(\bfx)$---this is denoted as the set $\emph{augment}_k(\spec_\emph{aug},\bfx)$.

Then, for each point $\bfz$ in the top-$k$ results, we compute an abstraction
$\mathit{abstract}(\spec_\emph{abs},\bfz)$, which is a set of constraints over-approximating the set of points in $\spec_\emph{abs}(\bfz)$.
Recall our overview in \cref{fig:overview} for a visual depiction of this process.

Finally, we return the worst-case loss.

\begin{algorithm}[tb]
   \caption{A3T}
   \label{alg:a3t}
\begin{algorithmic}
   \STATE {\bfseries Input:} $\spec=\{(T_1,\delta_1), \dots , (T_n,\delta_n)\}$ and point $(\bfx,y)$
   \STATE {\bfseries Output:} worst-case loss
   \STATE Split $\spec$ into $\spec_\emph{aug}$ and $\spec_\emph{abs}$
                and return
   \STATE \vspace{-5mm}\[\max_{\bfz\in \emph{augment}_k(\spec_\emph{aug},\bfx)} \mathcal{L}(\widehat{\bfz},y,\params)\ \ \ \ 
   \textit{\ s.t.\ }\ \widehat{\bfz} = \mathit{abstract}(\spec_\emph{abs},\bfz)
   \]
   \vspace{-4mm}
\end{algorithmic}
\end{algorithm}

\subsection{Computing Abstractions}\label{ssec:abstract}

We now show how to define the abstraction of a perturbation space $\spec(\bfx)$ defined by a specification
$\spec = \{(T_1,\delta_1),\allowbreak \ldots , (T_n, \delta_n)\}$.
Our approach generalizes that of \citet{DBLP:conf/emnlp/HuangSWDYGDK19} to length-preserving transformations, i.e., ones where the length of every string in
$\spec(\bfx)$ is the same as the length of the original string $\bfx$. The approach of \citet{DBLP:conf/emnlp/HuangSWDYGDK19} targeted the special case of single-character substitutions.

\textbf{Single transformation case}
We first demonstrate the case of a single length-preserving transformation, $\spec=\{(T,\delta)\}$. 
Henceforth we assume that each element of a string $\bfx$ is a real value, e.g., the embedding of a character or word. 
At a high level, our abstraction computes the convex hull that contains
all the points in $T(\bfx)$ (we use $T(\bfx)$
as a short hand for the perturbation space obtained by applying $T$ to $\bfx$
exactly once) and then scales this convex hull by $\delta$ to account
for the cases in which $T$ is applied up to $\delta$ times.
We begin by computing all points in $T(\bfx)$.
Let this set be $\bfx_0, \ldots, \bfx_m$.
Next, for $i \in [1,m]$, we define the set of points
\[
\bfv_i = \bfx + \delta\cdot(\bfx_i - \bfx).
\]
We then construct the abstraction $\mathit{abstract}(\spec_\emph{abs},\bfz)$ as the convex hull of the points $\bfv_i$ and $\bfx$.
Observe that we only need to enumerate the space $T(\bfx)$
obtained by applying $T$ once;
multiplying by $\delta$ \emph{dilates} the convex hull to include all strings that involve up to $\delta$ applications of $T$, i.e., $\spec(\bfx)$.
To propagate this convex hull through the network, we typically overapproximate
it as a set of \emph{interval} constraints, where each dimension of a string is represented by a lower and an upper bound. Interval constraints are easier to propagate through the network---requiring several forward passes linear in the length of the string---compared to arbitrary convex polyhedra, whose operations can be exponential in the number of dimensions~\cite{cousot1978automatic}.

\begin{example}\label{ex:a1}
Consider the left side of \cref{fig:abstract}.
Say we have the string \emph{ab} and the transformation 
 $T$ that can replace character $a$ with $z$, or $b$ with $g$---mimicking spelling mistakes, as these characters are adjacent on a QWERTY keyboard.
The large shaded region is the result of dilating $T$ with $\delta=2$,
i.e., contains all strings that can be produced by applying $T$ twice to the string $\emph{ab}$, namely, the string $\emph{zg}$.
\end{example}

\begin{figure}
\centering
\begin{tikzpicture}[scale=1.2]

    \tkzDefPoint(0,0){A}
    \tkzLabelPoint[left](A){\emph{ab}}
    
    \tkzDefPoint(0,.5){B}
    \tkzLabelPoint[left](B){\emph{ag}}
    
    \tkzDefPoint(.5,0){C}
    \tkzLabelPoint[below](C){\emph{zb}}
    
    \tkzDefPoint(.5,.5){D}
    \tkzLabelPoint[right](D){\emph{zg}}
    
    \filldraw[draw=black, fill=gray!20] (A) -- (B) -- (C) -- cycle;
    
    \filldraw[draw=black, fill=blue!50,opacity=0.2] (0,0) -- (0,1) -- (1,0) -- cycle;

    \foreach \n in {A,B,C,D}
        \node at (\n)[circle,fill,inner sep=1.5pt]{};
        
    \draw [->,thick,opacity=0] (0,0) -- (0,-.75) node (yaxis) [above] {};
    \draw [->,thick] (0,0) -- (1.2,0) node (yaxis) [right] {$x_1$};
    \draw [->,thick] (0,0) -- (0,1.2) node (yaxis) [above] {$x_2$};
\end{tikzpicture}
~
\begin{tikzpicture}[scale=1.2]

    \tkzDefPoint(0,.5){B}
    \tkzLabelPoint[right](B){\emph{cd}}
    
    \tkzDefPoint(.5,0){C}
    \tkzLabelPoint[right,yshift=.5em](C){\emph{dc}}
    
    \tkzDefPoint(-.5,0){B1}
    \tkzLabelPoint[left,yshift=-.5em](B1){\emph{bc}}
    
    \tkzDefPoint(0,-.5){C1}
    \tkzLabelPoint[left](C1){\emph{cb}}
    
    \tkzDefPoint(0.5,-.5){C2}
    \tkzLabelPoint[right](C2){\emph{db}}
    
    \tkzDefPoint(-0.5,.5){C3}
    \tkzLabelPoint[left](C3){\emph{bd}}
    
    \filldraw[draw=black, fill=gray!20, opacity=0.6] (B) -- (C) -- (C1) -- (B1) -- cycle;
    
    \filldraw[draw=black, fill=blue!50,opacity=0.2] (1,0) -- (0,1) -- (-1,0) -- (0,-1) -- cycle;
    
    \tkzDefPoint(0,0){A}
    \tkzLabelPoint[right,yshift=-.5em](A){\emph{cc}}

    \foreach \n in {A,B,C,B1,C1,C2, C3}
        \node at (\n)[circle,fill,inner sep=1.5pt]{};
        
    \draw [->,thick] (-1.2,0) -- (1.2,0) node (yaxis) [right] {$x_1$};
    \draw [->,thick] (0,-1.2) -- (0,1.2) node (yaxis) [right] {$x_2$};
\end{tikzpicture}
\caption{Illustration of an abstraction of a single (left) and multiple (right) transformations. See \cref{ex:a1} and \cref{ex:a2} for details.}
\label{fig:abstract}
\end{figure}

\textbf{General case}
We generalize the above abstraction process to the perturbation space $\spec=\{(T_1,\delta_1), \ldots, (T_n, \delta_n)\}$.
First, we enumerate all the strings 
in
$T_1(\bfx)\cup \dots \cup T_n(\bfx)$.
(Notice that we need only consider each transformation $T_i$ independently.)
Let this set be $\bfx_0, \ldots, \bfx_m$.
Next, for $i \in [1,m]$, we define the following set of points:
\begin{align*}
    \bfv_i = \bfx + (\delta_1+\ldots+\delta_n)\cdot(\bfx_i - \bfx)
\end{align*}
As with the single-transformation case, we  can now construct an abstraction of the convex hull induced by $\bfv_i$ and $\bfx$ as a set of intervals and propagate it through the network.

\begin{example}\label{ex:a2}
We illustrate the process of abstracting an input string $\emph{cc}$ on the right of Figure~\ref{fig:abstract} for the specification $\{(T_\emph{prev},1), (T_\emph{succ}, 1)\}$, where $T_\emph{prev}$ maps one character to its preceding character in the alphabet order, e.g., $c$ with $b$, and $T_\emph{succ}$ maps one character to its succeeding character in the alphabet order, e.g., $c$ with $d$.
We enumerate all the points in $T_\emph{prev}(\emph{cc})\cup T_\emph{succ}(\emph{cc})$ and compute their convex hull, shown as the inner shaded region.
This region includes all strings resulting from exclusively one application of $T_\emph{prev}$ or $T_\emph{succ}$.
Next, we dilate the convex hull by $2=1+1$ times to include all the points in the perturbation space of $\{(T_\emph{prev},1), (T_\emph{succ}, 1)\}$.
This is shown as the larger shaded region.
Notice how this region includes $\emph{bd}$ and $\emph{db}$, which result from an application of $T_\emph{succ}$ to the first character and $T_\emph{prev}$ to the second character of the original string $\emph{cc}$.
\end{example}

The following theorem states that this process is sound:
produces an overapproximation of a perturbation space $\spec(\bfx)$.
\begin{theorem}
\label{thm:soundness}
For every specification $\spec$
and input $\bfx$,
the abstracted perturbation space $\mathit{abstract}(\spec,\bfx)$ is an over-approximation of $\spec(\bfx)$---i.e.,
$
\spec(\bfx)\subseteq \mathit{abstract}(\spec,\bfx)
$
\end{theorem}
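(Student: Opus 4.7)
The plan is to take an arbitrary $\bfz \in \spec(\bfx)$ and write it as a convex combination of $\bfx$ and the dilated points $\bfv_1, \ldots, \bfv_m$. Since $\mathit{abstract}(\spec,\bfx)$ is the interval hull of the convex hull of these points, any convex combination lies inside it, yielding the required containment. The key leverage comes from length-preservation and non-overlap of the matched substrings, which together allow us to decompose the net displacement $\bfz - \bfx$ into a sum of single-application displacements.

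\textbf{Step 1: unpack the semantics.} Given $\bfz \in \spec(\bfx)$, by the semantics of the specification language there is a set $\sigma$ of pairwise non-overlapping substrings of $\bfx$, each matched by some $\varphi_{i}$, with at most $\delta_i$ substrings matched by $\varphi_i$, such that $\bfz$ arises from replacing each matched substring $\bfx'\in\sigma$ with a chosen string in $f_i(\bfx')$. Let $k = |\sigma|$ and index the resulting $k$ single-transformation replacements, which (applied in isolation to $\bfx$) produce strings among $\bfx_1,\ldots,\bfx_m$; call them $\bfx_{j_1},\ldots,\bfx_{j_k}$. Since every $T_i$ is length-preserving, each $\bfx_{j_\ell}$ has the same length as $\bfx$, and $\bfx_{j_\ell} - \bfx$ is supported only on the positions occupied by the $\ell$-th matched substring.

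\textbf{Step 2: the displacement decomposition.} Because the matched substrings are pairwise non-overlapping, the supports of the vectors $\bfx_{j_1}-\bfx, \ldots, \bfx_{j_k}-\bfx$ are disjoint, and by construction $\bfz$ agrees with $\bfx_{j_\ell}$ on the $\ell$-th substring and with $\bfx$ elsewhere. Therefore
\[
\bfz - \bfx \;=\; \sum_{\ell=1}^{k} (\bfx_{j_\ell} - \bfx).
\]
Let $\Delta = \delta_1 + \cdots + \delta_n$. By definition $\bfv_{j_\ell} - \bfx = \Delta \cdot (\bfx_{j_\ell} - \bfx)$, so dividing through gives
\[
\bfz \;=\; \Bigl(1 - \tfrac{k}{\Delta}\Bigr)\bfx \;+\; \sum_{\ell=1}^{k} \tfrac{1}{\Delta}\, \bfv_{j_\ell}.
\]

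\textbf{Step 3: convexity and interval containment.} The specification bounds the total number of applications by $k \leq \sum_i \delta_i = \Delta$, so the coefficient $1 - k/\Delta$ is non-negative, and the coefficients sum to $1$. Hence $\bfz$ is a convex combination of $\bfx$ and $\bfv_{j_1},\ldots,\bfv_{j_k}$, placing it in the convex hull used to define $\mathit{abstract}(\spec,\bfx)$. Since the interval overapproximation of that convex hull is coordinate-wise weaker, $\bfz \in \mathit{abstract}(\spec,\bfx)$, completing the proof.

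\textbf{Main obstacle.} The substantive content lives in Step 2: the clean linear decomposition of $\bfz - \bfx$ into $k$ single-application displacements is exactly where the non-overlap condition in the semantics and the length-preserving hypothesis are both used. Without non-overlap, two transformations could edit the same coordinate and the supports of the $\bfx_{j_\ell}-\bfx$ would interact; without length-preservation, $\bfx_{j_\ell}$ and $\bfx$ would not even live in the same vector space and the componentwise argument would fail. Once this decomposition is in hand, the remaining steps are routine arithmetic on convex combinations.
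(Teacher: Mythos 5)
Your proof is correct and follows essentially the same route as the paper's: decompose $\bfz-\bfx$ into the $k$ disjointly-supported single-application displacements (using non-overlap and length preservation), rescale each by $1/(\delta_1+\cdots+\delta_n)$ to express $\bfz$ as a convex combination of $\bfx$ and the dilated points $\bfv_{j_\ell}$, and use $k\le\sum_i\delta_i$ to check the coefficients. The only cosmetic difference is that the paper packages the final convexity step as a separate lemma proved by induction, whereas you invoke the standard characterization of the convex hull directly.
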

We prove Theorem~\ref{thm:soundness} in the Appendix.


\begin{table*}[t]
    \centering
     \caption{String transformations to construct the perturbation spaces for evaluation.}
     \vskip 0.1in
    \begin{tabular}{clp{11cm}l}
        \toprule
        & Transformation & Description & Training\\
        \midrule
        \multirow{4}{*}{\rotatebox[origin=c]{90}{\textsc{Char}}} & \cellcolor{Gray!10}$\Tswapchar{}$ & \cellcolor{Gray!10}\textbf{swap a pair} of two adjacent characters & \cellcolor{Gray!10}Augmentation\\  
        & $\Tdelchar{}$ & \textbf{delete} a character & Augmentation \\ 
        & \cellcolor{Gray!10}$\Tinschar{}$ & \cellcolor{Gray!10}\textbf{insert} to the right of a character one of its \textbf{adjacent} characters on the keyboard & \cellcolor{Gray!10}Augmentation \\
        & $\Tsubchar{}$ & \textbf{substitute} a character with an \textbf{adjacent} character on the keyboard & Abstraction\\ 
        \midrule
        \multirow{3}{*}{\rotatebox[origin=c]{90}{\textsc{Word}}} &  \cellcolor{Gray!10}$\Tdelword{}$ &   \cellcolor{Gray!10}\textbf{delete} a \textbf{stop word} & \cellcolor{Gray!10}Augmentation \\
        & $\Tinsword{}$ &  \textbf{duplicate} a word & Augmentation \\
        & \cellcolor{Gray!10}$\Tsubword{}$ & \cellcolor{Gray!10}\textbf{substitute} a word with one of its synonyms & \cellcolor{Gray!10}Abstraction \\
        \bottomrule

    \end{tabular}
   
    \label{tab:perturbation}
\end{table*}


\section{Experiments}
\begin{table*}[t]
    \centering
     \caption{Qualitative examples. The vanilla models correctly classify the  original samples but fail to classify the perturbed samples.}
     \vskip 0.1in
    \begin{tabular}{cl}
        \toprule
        Prediction & A character-level sample and a perturbed sample in $\{(\Tswapchar{}, 2), (\Tsubchar{},2)\}$ of AG dataset  \\
        \midrule
        Sci/Tech & ky. company wins grant to study peptides (ap) ap - a company founded by a chemistry researcher ...\\
        World & \perturbed{yk}. comp\perturbed{na}y wins gran\perturbed{f} to st\perturbed{8}dy peptides (ap) ap - a company founded by a chemistry researcher ...\\
        ~\\
        \toprule
        Prediction & A word-level sample and a perturbed sample in $\{(\Tdelword{}, 2), (\Tsubword{},2)\}$ of SST2 dataset  \\
        \midrule
        Positive & a dream cast of solid female talent who build a seamless ensemble .\\
        Negative & \perturbed{\sout{a}} \perturbed{dreaming} \perturbed{casting} of solid female talent who build \perturbed{\sout{a}} seamless ensemble .\\
        \bottomrule
    \end{tabular}
   
    \label{tab:examples}
\end{table*}
In this section, we evaluate \aat by answering
the following
research questions:
\begin{itemize}
    \item \textbf{RQ1:} Does \aat improve robustness in rich perturbation spaces for character-level and word-level models?
    \item \textbf{RQ2:} How does the complexity of the perturbation space affect the effectiveness of \aat{}?
\end{itemize}

\subsection{Experimental Setup}
\label{sec:expsetup}
\subsubsection{Datasets and Models}
We use two datasets: 
 \begin{itemize}
 \item \textbf{AG} News~\cite{DBLP:conf/nips/ZhangZL15} dataset consists of a corpus of news articles collected by~\citet{DBLP:conf/www/Gulli05} about the four largest news topics. We used the online-available dataset\footnote{This is the website describing the dataset: \url{https://github.com/mhjabreel/CharCnn_Keras/tree/master/data/ag_news_csv}.}, which contains 30,000 training and 1,900 testing examples for each class. We split the first 4,000 training examples for validation.
 \item \textbf{SST2}~\cite{DBLP:conf/emnlp/SocherPWCMNP13} is the Stanford Sentiment Treebank dataset that consists of sentences from movie reviews and human annotations of their sentiment. The task is to predict the sentiment (positive/negative) of a given sentence. We used the dataset provided by TensorFlow\footnote{This is the website describing the dataset: \url{https://www.tensorflow.org/datasets/catalog/glue}.}. The dataset contains 67,349 training, 872 validation, and 1,821 testing examples for each class.
\end{itemize}

\yhmodify{For the AG dataset, we trained a character-level model proposed by ~\citet{DBLP:conf/nips/ZhangZL15} following their setup.
For the SST2 dataset, we trained a word-level model proposed by~\citet{kim-2014-convolutional} also following their setup.}{For the AG dataset, we trained a smaller character-level model than the one used in ~\citet{DBLP:conf/emnlp/HuangSWDYGDK19} but kept the number of layers and the data preprocessing the same. 
For the SST2 dataset, we trained a word-level model and a character-level model. We used the same models in ~\citet{DBLP:conf/emnlp/HuangSWDYGDK19}, also following their setup.}
We provide the details of the setups in the Appendix.

\subsubsection{Perturbations}
Our choice of models allows us to experiment with both character-level and word-level perturbations.
We evaluated \aat on six perturbation spaces constructed using the seven
individual string transformations in Table~\ref{tab:perturbation}. 

For the character-level model on dataset AG, we used the following specifications:
\yhmodify{$\{(\Tswapchar{}, 1), (\Tsubchar{},1)\}$, $\{(\Tdelchar{}, 1), (\Tsubchar{},1)\}$, and $\{(\Tinschar{}, 1), (\Tsubchar{},1)\}$.}{$\{(\Tswapchar{}, 2), (\Tsubchar{},2)\}$, $\{(\Tdelchar{}, 2), (\Tsubchar{},2)\}$, and $\{(\Tinschar{}, 2), (\Tsubchar{},2)\}$.}
For example, the first specification mimics the combination of two spelling mistakes: swap two characters \yhmodify{}{up to twice} and/or substitute a character with an adjacent one on the keyboard \yhmodify{}{up to twice}.

For the word-level model on dataset SST2, we used the following specifications:
$\{(\Tdelword{},2),\allowbreak(\Tsubword{},2)\}$, $\{(\Tinsword{},2),\allowbreak(\Tsubword{},2)\}$, and $\{(\Tdelword{},2),\allowbreak(\Tinsword{},2),\allowbreak(\Tsubword{},2)\}$.
For example, the first specification removes stop words \yhmodify{(up to 2)}{up to twice} and substitutes \yhmodify{(up to 2)}{up to twice} words with synonyms. 

\yhmodify{}{
For the character-level model on dataset SST2, we used the following specifications:
$\{(\Tswapchar{}, 1), (\Tsubchar{},1)\}$, $\{(\Tdelchar{}, 1), (\Tsubchar{},1)\}$, and $\{(\Tinschar{}, 1), (\Tsubchar{},1)\}$.
For example, the first specification mimics the combination of two spelling mistakes: swap two characters and/or substitute a character with an adjacent one on the keyboard.
}

\yhmodify{For the character-level model, we considered perturbations with $\delta=1$ because one cannot efficiently evaluate the \oracle{} accuracy with larger $\delta$, due to the combinatorial explosion of the size of the perturbation space.}{For the character-level model on AG dataset, we considered the perturbations to be applied to a prefix of an input string, namely, a prefix length of 35 for $\{(\Tswapchar{}, 2), (\Tsubchar{},2)\}$, a prefix length of 30 for $\{(\Tdelchar{}, 2), (\Tsubchar{},2)\}$, and $\{(\Tinschar{}, 2), (\Tsubchar{},2)\}$.
For the character-level model on SST2 dataset, we considered perturbations with $\delta=1$ but allow the perturbations to be applied to the whole input string. We made these restrictions because one cannot efficiently evaluate the \oracle{} accuracy with larger $\delta$, due to the combinatorial explosion of the size of the perturbation space.}

\begin{table*}[t]
    \centering
    \caption{Experiment results for the three perturbations on the character-level model on AG dataset. We show the normal accuracy (Acc.), HotFlip accuracy (HF Acc.), and \oracle{} accuracy (\Oracleshort{}) of five different training methods.}
    \vskip 0.1in
    \begin{tabular}{l *3{p{\lengthcellshorter{}}p{\lengthcell{}}p{\lengthcelllonger{}}}}
        \toprule
         & \multicolumn{3}{>{\centering\arraybackslash}p{\lengththreecell{}}}{$\{(\Tswapchar{}, 2),(\Tsubchar{},2)\}$} & \multicolumn{3}{>{\centering\arraybackslash}p{\lengththreecell{}}}{$\{(\Tdelchar{}, 2),(\Tsubchar{},2)\}$} & \multicolumn{3}{>{\centering\arraybackslash}p{\lengththreecell{}}}{$\{(\Tinschar{}, 2),(\Tsubchar{},2)\}$} \\ 
         
         \cmidrule(lr){2-4} \cmidrule(lr){5-7} \cmidrule(lr){8-10}
        Training & Acc. & {\small HF Acc.} & \Oracleshort{} & Acc. & {\small HF Acc.} & \Oracleshort{} & Acc. & {\small HF Acc.} & \Oracleshort{}\\
        \midrule
        Normal & 87.5 & 71.5 & \cellcolor{Gray!10}60.1 & 87.5 & 79.0 & \cellcolor{Gray!10}62.5 & 87.5 & 79.1 & \cellcolor{Gray!10}59.0 \\ 
        Random Aug. & 87.5 & 75.7 & \cellcolor{Gray!10}68.2 [+8.1] & 87.4 & 81.3 & \cellcolor{Gray!10}69.4 [+6.9] & 87.8 & 81.2 & \cellcolor{Gray!10}69.7 [+10.7] \\
        HotFlip Aug. & 86.6 & 85.7 &\cellcolor{Gray!10} 84.9 [+24.8] & 85.8 & 84.9 & \cellcolor{Gray!10}82.7 [+20.2] & 86.8 & 85.9 & \cellcolor{Gray!10}82.6 [+23.6]\\ 
        \advab{}  & 86.4 & 86.4 & \cellcolor{Gray!10}86.4 [+26.3] & 87.2 & 87.1 & \cellcolor{Gray!10}85.7 [+23.2] & 87.4 & 87.4 & \cellcolor{Gray!10}85.5 [+26.5] \\
        \enumab{}  & 86.9 & 86.8 & \cellcolor{Gray!10}\textbf{86.8 [+26.7]} & 87.6 & 87.4 & \cellcolor{Gray!10}\textbf{86.2 [+23.7]} & 87.9 & 87.8 & \cellcolor{Gray!10}\textbf{86.5 [+27.5]}\\
        \bottomrule
    \end{tabular}
    \label{tab:table1}
\end{table*}


\begin{table*}[t]
    \centering
    \caption{Experiment results for the three perturbations on the word-level model on SST dataset.}
    \vskip 0.1in
    \begin{tabular}{l *3{p{\lengthcellshorter{}}p{\lengthcell{}}p{\lengthcelllonger{}}}}
        \toprule
        & \multicolumn{3}{>{\centering\arraybackslash}p{\lengththreecell{}}}{$\{\Tdelword{},2),(\Tsubword{},2)\}$} & \multicolumn{3}{>{\centering\arraybackslash}p{\lengththreecell{}}}{$\{(\Tinsword{},2),(\Tsubword{},2)\}$} & \multicolumn{3}{>{\centering\arraybackslash}p{\lengththreecell{}}}{\small $\{(\Tdelword{},2),(\Tinsword{},2),(\Tsubword{},2)\}$} \\
        \cmidrule(lr){2-4} \cmidrule(lr){5-7} \cmidrule(lr){8-10}
        Training & Acc. & {\small HF Acc.} & \Oracleshort{} & Acc. & {\small HF Acc.} & \Oracleshort{} & Acc. & {\small HF Acc.} & \Oracleshort{}\\
        \midrule
        Normal & 82.4 & 68.9 & \cellcolor{Gray!10}64.4 & 82.4 & 55.8 & \cellcolor{Gray!10}47.9 & 82.4 & 54.8 & \cellcolor{Gray!10}42.4 \\
        Random Aug.  & 80.0 & 70.0 & \cellcolor{Gray!10}66.0 [+1.6] & 81.5 & 54.2 & \cellcolor{Gray!10}49.7 [+1.8] & 81.0 & 56.1 & \cellcolor{Gray!10}46.2 [+3.8] \\
        HotFlip Aug. & 80.8 & 74.4 & \cellcolor{Gray!10}68.3 [+3.9] & 80.8 & 68.7 & \cellcolor{Gray!10}56.0 [+8.1] & 81.2 & 69.0 & \cellcolor{Gray!10}51.0 [+8.6] \\ 
        \advab{}  & 80.2 & 73.5 & \cellcolor{Gray!10}70.2 [+5.8] & 79.9 & 69.7 & \cellcolor{Gray!10}57.7 [+9.8] & 78.8 & 68.1 & \cellcolor{Gray!10}55.1 [+12.7] \\
        \enumab{} & 79.9 & 74.4 & \cellcolor{Gray!10}\textbf{71.2 [+6.8]} & 79.0 & 70.7 & \cellcolor{Gray!10}\textbf{62.7 [+14.8]} & 77.7 & 69.8 & \cellcolor{Gray!10}\textbf{59.8 [+17.4]}\\
        \bottomrule
    \end{tabular}
    \label{tab:table2}
\end{table*}


\begin{table*}[t]
    \centering
    \caption{Experiment results for the three perturbations on the character-level model on SST2 dataset.}
    \vskip 0.1in
    \begin{tabular}{l *3{p{\lengthcellshorter{}}p{\lengthcell{}}p{\lengthcelllonger{}}}}
        \toprule
         & \multicolumn{3}{>{\centering\arraybackslash}p{\lengththreecell{}}}{$\{(\Tswapchar{}, 1),(\Tsubchar{},1)\}$} & \multicolumn{3}{>{\centering\arraybackslash}p{\lengththreecell{}}}{$\{(\Tdelchar{}, 1),(\Tsubchar{},1)\}$} & \multicolumn{3}{>{\centering\arraybackslash}p{\lengththreecell{}}}{$\{(\Tinschar{}, 1),(\Tsubchar{},1)\}$} \\ 
         
         \cmidrule(lr){2-4} \cmidrule(lr){5-7} \cmidrule(lr){8-10}
        Training & Acc. & {\small HF Acc.} & \Oracleshort{} & Acc. & {\small HF Acc.} & \Oracleshort{} & Acc. & {\small HF Acc.} & \Oracleshort{}\\
        \midrule
        Normal & 77.0 & 36.5 & \cellcolor{Gray!10}23.0 & 77.0 & 50.7 & \cellcolor{Gray!10}25.8 & 77.0 & 51.0 & \cellcolor{Gray!10}24.4 \\ 
        Random Aug. & 75.6 & 47.1 & \cellcolor{Gray!10}28.2 [+5.2] & 75.7 & 56.4 & \cellcolor{Gray!10}29.3 [+3.5] & 74.5 & 57.0 & \cellcolor{Gray!10}33.8 [+9.4] \\
        HotFlip Aug. & 71.4 & 63.9 &\cellcolor{Gray!10} 34.8 [+11.8] & 76.6 & 67.1 & \cellcolor{Gray!10}38.0 [+12.2] & 76.1 & 70.4 & \cellcolor{Gray!10}33.4 [+9.0]\\ 
        \advab{}  & 73.6 & 54.8 & \cellcolor{Gray!10}35.2 [+12.2] & 75.3 & 58.2 & \cellcolor{Gray!10}32.9 [+7.1] & 72.4 & 66.3 & \cellcolor{Gray!10}44.7 [+20.3] \\
        \enumab{}  & 70.2 & 57.1 & \cellcolor{Gray!10}\textbf{48.7 [+15.7]} & 72.5 & 62.5 & \cellcolor{Gray!10}\textbf{44.8 [+19.0]} & 71.6 & 65.0 & \cellcolor{Gray!10}\textbf{55.2 [+30.8]}\\
        \bottomrule
    \end{tabular}
    \label{tab:table4}
\end{table*}


\begin{figure*}[ht]
\vskip 0.1in
\begin{center}
\subfigure[$\{(\Tdelword{}, \delta_1),(\Tsubword{}, 2)\}$]{\includegraphics[width=0.47\textwidth]{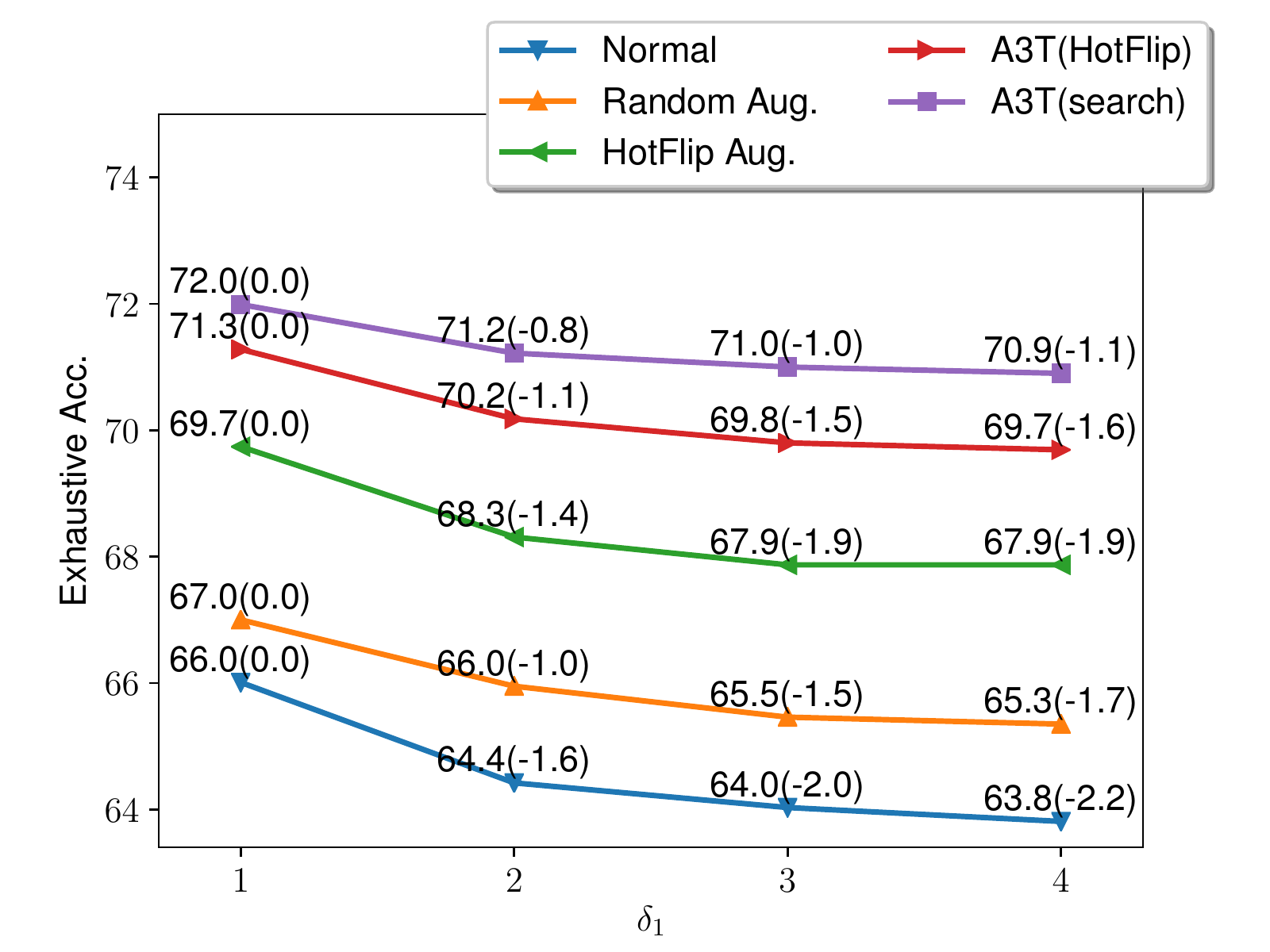}\label{fig:generalize_left}}
\subfigure[$\{(\Tdelword{}, 2),(\Tsubword{}, \delta_2)\}$]{\includegraphics[width=0.47\textwidth]{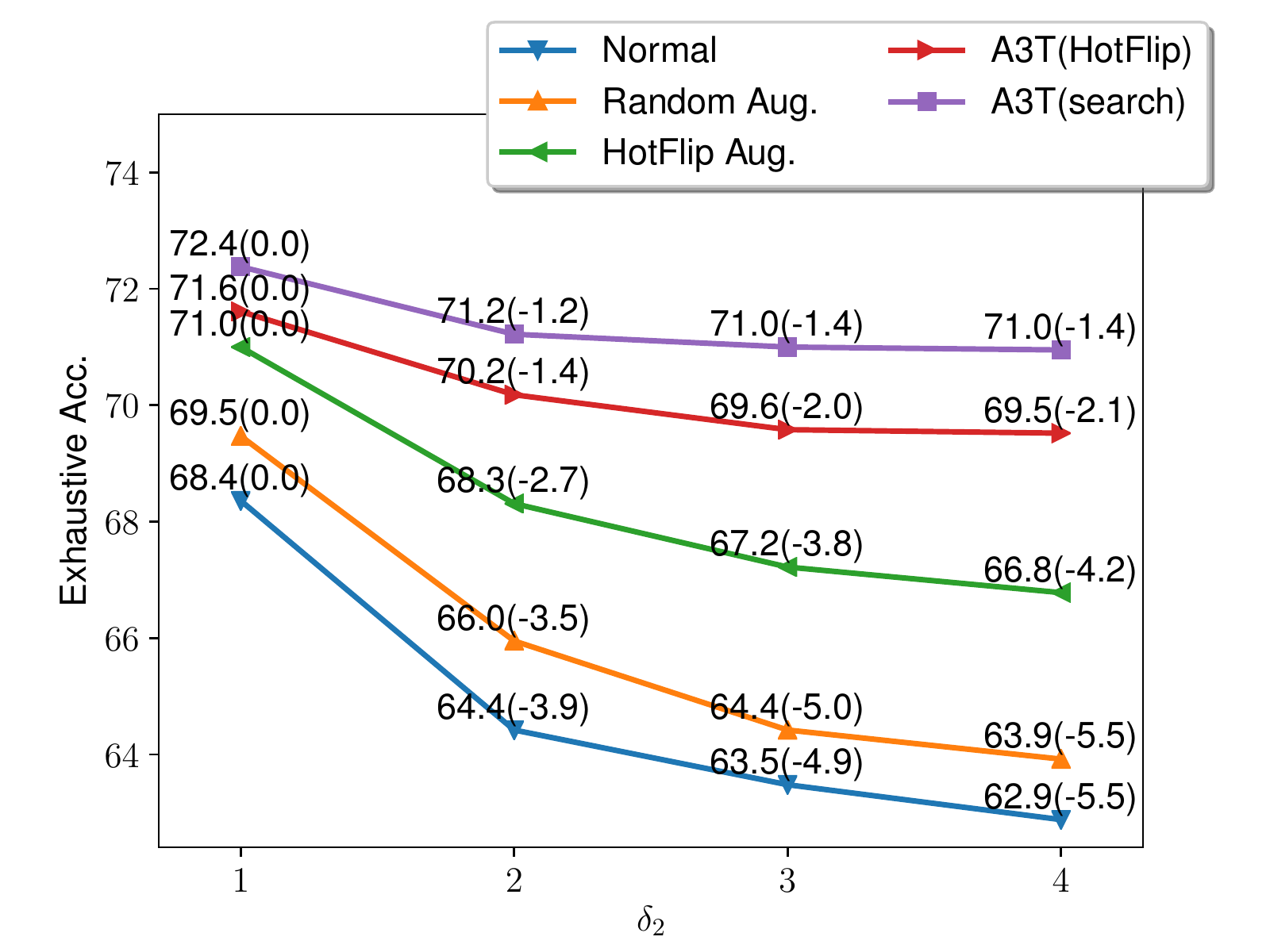}\label{fig:generalize_right}}
\caption{The \oracle{} accuracy of $\{(\Tdelword{}, \delta_1), (\Tsubword{}, \delta_2)\}$, varying the parameters $\delta_1$ (left) and $\delta_2$ (right) between 1 and 4.}
\label{fig:generalize}
\end{center}
\vskip -0.1in
\end{figure*}
\subsubsection{Training Methods}
We implement and compare the following 
training methods.


\textbf{Normal training} is the vanilla training method (\cref{eq:normal}) that  minimizes the cross entropy between predictions and target labels.
This method does not use the perturbation space and does not attempt to train a robust model.

\textbf{Random augmentation}
performs adversarial training (\cref{eq:advtrain}) using a weak adversary that simply picks a random \perturbedsample{} from the perturbation space.

\textbf{HotFlip augmentation} performs adversarial training (\cref{eq:advtrain}) using the HotFlip~\cite{DBLP:conf/acl/EbrahimiRLD18} attack to solve the inner maximization problem.


\textbf{\aat{}} is our technique that can be implemented in various ways.
For our experiments, we made the following choices.
First, we manually labeled which transformations in $\spec$ are explored using augmentation and which ones are explored using abstract interpretation (the third column in Table~\ref{tab:perturbation}).\footnote{We consider this choice of split to be a hyperparameter.}
Second, we implemented two different ways of performing data augmentation for the transformations in $\spec_\emph{aug}$:
(1) \textbf{\advab{}} uses HotFlip to find the worst-case samples for augmentation, while (2) \textbf{\enumab{}} performs an explicit
search through the perturbation space to find the worst-case samples for augmentation.
Finally, we used DiffAI~\cite{diffai} to perform abstract training for the transformations in $\spec_\emph{abs}$, using the intervals abstraction.

In all augmentation training baselines, and \aat, we also adopt a curriculum-based training method~\cite{DBLP:conf/emnlp/HuangSWDYGDK19, DBLP:conf/iccv/GowalDSBQUAMK19} which uses a hyperparameter $\lambda$ to weigh between normal loss and maximization objective in \cref{eq:advtrain}.

\subsubsection{Evaluation Metrics}
\textbf{Normal accuracy} is the vanilla accuracy of the model on the test set.

\textbf{HotFlip accuracy} is the adversarial accuracy of the model with respect to the HotFlip attack, i.e., for each point in the test set, we apply the HotFlip attack and test if the classification is still correct.

\textbf{\Oracle{} accuracy} (Eq~\ref{eq: exhaustiveacc})
is the worst-case accuracy of the model: a prediction on $(\bfx,y)$ is considered correct
if and only if all points $\bfz \in \spec(\bfx)$ lead to the correct prediction.

By definition, HotFlip accuracy is an upper bound on \oracle{} accuracy.

\subsection{Evaluation Results}
\label{sec:expresult}
\paragraph{RQ1: Increase in robustness}
We show the results for the selected perturbation spaces on character-level and word-level models in
Tables \ref{tab:table1}, \ref{tab:table2}, and \ref{tab:table4}, respectively.

Compared to normal training, the results show that both \advab{} and \enumab{} increase the \oracle{} accuracy and can improve the robustness of the model.
\advab{} and \enumab{} also
outperform random augmentation and HotFlip augmentation.
In particular, \enumab{} has \oracle{} accuracy that is
on average \yhmodify{$18.7$}{$20.3$} higher than normal training,
\yhmodify{$14.9$}{$14.6$} higher than random augmentation,
and \yhmodify{$9.6$}{$6.7$} higher than HotFlip augmentation.

\yhmodify{
We also compared \aat{} to training using only abstraction (i.e., all transformations in $\spec$
are also in $\spec_\emph{abs}$) for the specification $\{(\Tswapchar{},1),(\Tsubchar{},1)\}$ (not shown in \cref{tab:table1,tab:table2});
this is the only specification that can be fully trained abstractly since it only uses length-preserving transformations. 
Training using only abstraction yields an \oracle{} accuracy of $66.3$, which is better than the one obtained using normal training, but much lower than the \oracle{} accuracy of \advab{} and \enumab{}.
Furthermore, the normal accuracy of the abstraction technique drops to $75.3$ due to the over-approximation of the perturbation space while \advab{} ($84.6$) and \enumab{} ($87.2$) retain high normal accuracy.
}
{
We also compared \aat{} to training using only abstraction (i.e., all transformations in $\spec$
are also in $\spec_\emph{abs}$) for the specification $\{(\Tswapchar{},2),(\Tsubchar{},2)\}$ on AG dataset and $\{(\Tswapchar{},1),(\Tsubchar{},1)\}$ on SST2 dataset (not shown in Tables \ref{tab:table1}, \ref{tab:table2}, and \ref{tab:table4});
this is the only specification that can be fully trained abstractly since it only uses length-preserving transformations. 
Training using only abstraction yields an \oracle{} accuracy of $86.9$ for $\{(\Tswapchar{},2),(\Tsubchar{},2)\}$ on AG dataset, which is similar to the \oracle{} accuracy of \advab{} ($86.4$) and \enumab{} ($86.8$).
However, training using only abstraction yields an \oracle{} accuracy of $47.0$ for $\{(\Tswapchar{},1),(\Tsubchar{},1)\}$ on SST2 dataset, which is better than the one obtained using normal training, but much lower than the \oracle{} accuracy of \advab{} and \enumab{}.
Furthermore, the normal accuracy of the abstraction technique on SST2 dataset drops to $58.8$ due to the over-approximation of the perturbation space while \advab{} ($73.6$) and \enumab{} ($70.2$) retain high normal accuracy.}

To answer \textbf{RQ1}, \textbf{\aat{} yields models that are  more robust to complex perturbation spaces than those produced by augmentation and abstraction techniques.}
This result holds for both
character-level and word-level models.

\paragraph{RQ2: Effects of size of the perturbation space}

In this section, we evaluate whether
\aat{} can produce models that are robust to complex perturbation spaces. 
We fix the word-level model \aat(search) trained on $\{(\Tdelword{}, 2),(\Tsubword{}, 2)\}$.
Then, we test this model's exhaustive accuracy on
$\{(\Tdelword{}, \delta_1),(\Tsubword{}, 2)\}$ (Figure~\ref{fig:generalize_left}) and
$\{(\Tdelword{}, 2),(\Tsubword{}, \delta_2)\}$
(Figure~\ref{fig:generalize_right}),
where we vary the parameters $\delta_1$ and $\delta_2$ between 1 and 4,
increasing the size of the perturbation space.
(The Appendix contains a more detailed evaluation with different types of transformations.)
We only consider word-level models because computing the \oracle{} accuracy requires us to enumerate 
all the elements in the perturbation space.
While enumeration is feasible for word-level transformations (e.g., the perturbation space of $\{(\Tdelword{}, 4), (\Tsubword{}, 2)\}$ 
for a string with \num[group-separator={,}]{56} tokens
contains at most \num[group-separator={,}]{68002} \perturbedsample{s}),
enumeration is infeasible for character level transformations (e.g., 
\yhmodify{the perturbation space of $\{(\Tdelchar{}, 4), (\Tsubchar{}, 1)\}$ 
for a string with \num[group-separator={,}]{300} characters
contains \num[group-separator={,}]{20252321116} \perturbedsample{s}!).}{the perturbation space of $\{(\Tdelchar{}, 4), (\Tsubchar{}, 2)\}$ 
on a prefix of length 30 of string with \num[group-separator={,}]{300} characters
contains \num[group-separator={,}]{7499469} \perturbedsample{s}, and the perturbation space of $\{(\Tdelchar{}, 4), (\Tsubchar{}, 1)\}$ 
for a string with \num[group-separator={,}]{300} characters
contains \num[group-separator={,}]{20252321116} \perturbedsample{s}!).}


The \oracle{} accuracy of \advab{} and \enumab{} decreases by \yhmodify{$1.8\%$ and $2.2\%$}{$1.6\%$ and $1.1\%$}, respectively, when increasing $\delta_1$ from $1$ to $4$, and decreases by \yhmodify{$1.8\%$ and $2.5\%$}{$2.1\%$ and $1.4\%$}, respectively, when increasing $\delta_2$ from $1$ to $4$.
\yhmodify{
All other techniques result in larger decreases in \oracle{} accuracy (${\ge}1.7\%$ for all methods).}
{All other techniques result in larger decreases in \oracle{} accuracy (${\ge}1.7\%$ in $\{(\Tdelword{}, \delta_1),(\Tsubword{}, 2)\}$ and ${\ge}4.2\%$ in $\{(\Tdelword{}, 2),(\Tsubword{}, \delta_2)\}$).}

To answer \textbf{RQ2},
\textbf{even in the presence of large perturbation spaces
\aat{} yields models that are  more robust
than those produced by augmentation techniques}.


\section{Conclusion, Limitations, and Future Work}
We presented an adversarial training technique, \aat{}, combining augmentation and abstraction techniques 
to achieve robustness against programmable string transformations
in  neural networks for NLP tasks. 
In the experiments, we showed that \aat{} yields more robust models than augmentation and abstraction techniques. 

We foresee many future improvements to \aat{}.
First, \aat{} cannot currently generalize to RNNs because its abstraction technique can only be applied to models where
the first layer is an affine transformation (e.g., linear or convolutional layer).
Applying \aat{} to RNNs will require designing new abstraction techniques for RNNs.
Second, we manually split $\spec{}$ into $\spec{}_\emph{aug}$ and $\spec{}_\emph{abs}$. Performing the split automatically is left as future work.
Third, \enumab{} achieves the best performance by
looking for the worst-case \perturbedsample{}
in the perturbation space of $\spec{}_\emph{aug}$ 
via \textit{enumeration}.
In some practical settings, $\spec{}_\emph{aug}$ 
might induce a large perturbation space, and it might be best to use \advab{} instead.
Fourth, we choose HotFlip and interval abstraction 
to approximate the worst-case loss in our experiments, 
but our approach is general and can benefit from new augmentation and abstraction techniques.

\clearpage

\section*{Acknowledgements}
We would like to thank Samuel Drews for his feedback, as well as the reviewers. This work is supported by the National Science Foundation grants CCF-1704117, CCF-1918211, and Facebook research award Probability and Programming.

\bibliography{example_paper}

\begin{thebibliography}{32}
\providecommand{\natexlab}[1]{#1}
\providecommand{\url}[1]{\texttt{#1}}
\expandafter\ifx\csname urlstyle\endcsname\relax
  \providecommand{\doi}[1]{doi: #1}\else
  \providecommand{\doi}{doi: \begingroup \urlstyle{rm}\Url}\fi

\bibitem[Balunovic \& Vechev(2020)Balunovic and
  Vechev]{DBLP:conf/iclr/BalunovicV20}
Balunovic, M. and Vechev, M.~T.
\newblock Adversarial training and provable defenses: Bridging the gap.
\newblock In \emph{8th International Conference on Learning Representations,
  {ICLR} 2020, Addis Ababa, Ethiopia, April 26-30, 2020}, 2020.
\newblock URL \url{https://openreview.net/forum?id=SJxSDxrKDr}.

\bibitem[Carlini \& Wagner(2017)Carlini and Wagner]{DBLP:conf/ccs/Carlini017}
Carlini, N. and Wagner, D.~A.
\newblock Adversarial examples are not easily detected: Bypassing ten detection
  methods.
\newblock In \emph{Proceedings of the 10th {ACM} Workshop on Artificial
  Intelligence and Security, AISec@CCS 2017, Dallas, TX, USA, November 3,
  2017}, pp.\  3--14, 2017.
\newblock \doi{10.1145/3128572.3140444}.
\newblock URL \url{https://doi.org/10.1145/3128572.3140444}.

\bibitem[Cousot \& Halbwachs(1978)Cousot and Halbwachs]{cousot1978automatic}
Cousot, P. and Halbwachs, N.
\newblock Automatic discovery of linear restraints among variables of a
  program.
\newblock In \emph{Proceedings of the 5th ACM SIGACT-SIGPLAN symposium on
  Principles of programming languages}, pp.\  84--96, 1978.

\bibitem[Ebrahimi et~al.(2018)Ebrahimi, Rao, Lowd, and
  Dou]{DBLP:conf/acl/EbrahimiRLD18}
Ebrahimi, J., Rao, A., Lowd, D., and Dou, D.
\newblock Hotflip: White-box adversarial examples for text classification.
\newblock In \emph{Proceedings of the 56th Annual Meeting of the Association
  for Computational Linguistics, {ACL} 2018, Melbourne, Australia, July 15-20,
  2018, Volume 2: Short Papers}, pp.\  31--36, 2018.
\newblock \doi{10.18653/v1/P18-2006}.
\newblock URL \url{https://www.aclweb.org/anthology/P18-2006/}.

\bibitem[Goodfellow et~al.(2015)Goodfellow, Shlens, and
  Szegedy]{DBLP:journals/corr/GoodfellowSS14}
Goodfellow, I.~J., Shlens, J., and Szegedy, C.
\newblock Explaining and harnessing adversarial examples.
\newblock In \emph{3rd International Conference on Learning Representations,
  {ICLR} 2015, San Diego, CA, USA, May 7-9, 2015, Conference Track
  Proceedings}, 2015.
\newblock URL \url{http://arxiv. org/abs/1412.6572}.

\bibitem[Gowal et~al.(2019)Gowal, Dvijotham, Stanforth, Bunel, Qin, Uesato,
  Arandjelovic, Mann, and Kohli]{DBLP:conf/iccv/GowalDSBQUAMK19}
Gowal, S., Dvijotham, K., Stanforth, R., Bunel, R., Qin, C., Uesato, J.,
  Arandjelovic, R., Mann, T.~A., and Kohli, P.
\newblock Scalable verified training for provably robust image classification.
\newblock In \emph{2019 {IEEE/CVF} International Conference on Computer Vision,
  {ICCV} 2019, Seoul, Korea (South), October 27 - November 2, 2019}, pp.\
  4841--4850, 2019.
\newblock \doi{10.1109/ICCV.2019.00494}.
\newblock URL \url{https://doi.org/10.1109/ICCV.2019.00494}.

\bibitem[Gulli(2005)]{DBLP:conf/www/Gulli05}
Gulli, A.
\newblock The anatomy of a news search engine.
\newblock In \emph{Proceedings of the 14th international conference on World
  Wide Web, {WWW} 2005, Chiba, Japan, May 10-14, 2005 - Special interest tracks
  and posters}, pp.\  880--881, 2005.
\newblock \doi{10.1145/1062745.1062778}.

\bibitem[Huang et~al.(2019)Huang, Stanforth, Welbl, Dyer, Yogatama, Gowal,
  Dvijotham, and Kohli]{DBLP:conf/emnlp/HuangSWDYGDK19}
Huang, P., Stanforth, R., Welbl, J., Dyer, C., Yogatama, D., Gowal, S.,
  Dvijotham, K., and Kohli, P.
\newblock Achieving verified robustness to symbol substitutions via interval
  bound propagation.
\newblock In \emph{Proceedings of the 2019 Conference on Empirical Methods in
  Natural Language Processing and the 9th International Joint Conference on
  Natural Language Processing, {EMNLP-IJCNLP} 2019, Hong Kong, China, November
  3-7, 2019}, pp.\  4081--4091, 2019.
\newblock \doi{10.18653/v1/D19-1419}.
\newblock URL \url{https://doi.org/10.18653/v1/D19-1419}.

\bibitem[Iyyer et~al.(2018)Iyyer, Wieting, Gimpel, and
  Zettlemoyer]{DBLP:conf/naacl/IyyerWGZ18}
Iyyer, M., Wieting, J., Gimpel, K., and Zettlemoyer, L.
\newblock Adversarial example generation with syntactically controlled
  paraphrase networks.
\newblock In \emph{Proceedings of the 2018 Conference of the North American
  Chapter of the Association for Computational Linguistics: Human Language
  Technologies, {NAACL-HLT} 2018, New Orleans, Louisiana, USA, June 1-6, 2018,
  Volume 1 (Long Papers)}, pp.\  1875--1885, 2018.
\newblock URL \url{https://www.aclweb.org/anthology/N18-1170/}.

\bibitem[Jia et~al.(2019)Jia, Raghunathan, G{\"{o}}ksel, and
  Liang]{DBLP:conf/emnlp/JiaRGL19}
Jia, R., Raghunathan, A., G{\"{o}}ksel, K., and Liang, P.
\newblock Certified robustness to adversarial word substitutions.
\newblock In \emph{Proceedings of the 2019 Conference on Empirical Methods in
  Natural Language Processing and the 9th International Joint Conference on
  Natural Language Processing, {EMNLP-IJCNLP} 2019, Hong Kong, China, November
  3-7, 2019}, pp.\  4127--4140, 2019.
\newblock \doi{10.18653/v1/D19-1423}.
\newblock URL \url{https://doi.org/10.18653/v1/D19-1423}.

\bibitem[Karpukhin et~al.(2019)Karpukhin, Levy, Eisenstein, and
  Ghazvininejad]{DBLP:conf/aclnut/KarpukhinLEG19}
Karpukhin, V., Levy, O., Eisenstein, J., and Ghazvininejad, M.
\newblock Training on synthetic noise improves robustness to natural noise in
  machine translation.
\newblock In \emph{Proceedings of the 5th Workshop on Noisy User-generated
  Text, W-NUT@EMNLP 2019, Hong Kong, China, November 4, 2019}, pp.\  42--47,
  2019.
\newblock \doi{10.18653/v1/D19-5506}.
\newblock URL \url{https://doi.org/10.18653/v1/D19-5506}.

\bibitem[Kingma \& Ba(2015)Kingma and Ba]{DBLP:journals/corr/KingmaB14}
Kingma, D.~P. and Ba, J.
\newblock Adam: {A} method for stochastic optimization.
\newblock In \emph{3rd International Conference on Learning Representations,
  {ICLR} 2015, San Diego, CA, USA, May 7-9, 2015, Conference Track
  Proceedings}, 2015.
\newblock URL \url{http://arxiv.org/abs/1412.6980}.

\bibitem[Ko et~al.(2019)Ko, Lyu, Weng, Daniel, Wong, and
  Lin]{DBLP:conf/icml/KoLWDWL19}
Ko, C., Lyu, Z., Weng, L., Daniel, L., Wong, N., and Lin, D.
\newblock {POPQORN:} quantifying robustness of recurrent neural networks.
\newblock In \emph{Proceedings of the 36th International Conference on Machine
  Learning, {ICML} 2019, 9-15 June 2019, Long Beach, California, {USA}}, pp.\
  3468--3477, 2019.
\newblock URL \url{http://proceedings.mlr.press/v97/ko19a.html}.

\bibitem[Liang et~al.(2018)Liang, Li, Su, Bian, Li, and
  Shi]{DBLP:conf/ijcai/0002LSBLS18}
Liang, B., Li, H., Su, M., Bian, P., Li, X., and Shi, W.
\newblock Deep text classification can be fooled.
\newblock In \emph{Proceedings of the Twenty-Seventh International Joint
  Conference on Artificial Intelligence, {IJCAI} 2018, July 13-19, 2018,
  Stockholm, Sweden}, pp.\  4208--4215, 2018.
\newblock \doi{10.24963/ijcai.2018/585}.
\newblock URL \url{https://doi.org/10.24963/ijcai.2018/585}.

\bibitem[Madry et~al.(2018)Madry, Makelov, Schmidt, Tsipras, and
  Vladu]{DBLP:conf/iclr/MadryMSTV18}
Madry, A., Makelov, A., Schmidt, L., Tsipras, D., and Vladu, A.
\newblock Towards deep learning models resistant to adversarial attacks.
\newblock In \emph{6th International Conference on Learning Representations,
  {ICLR} 2018, Vancouver, BC, Canada, April 30 - May 3, 2018, Conference Track
  Proceedings}, 2018.
\newblock URL \url{https://openreview.net/forum?id=rJzIBfZAb}.

\bibitem[Michel et~al.(2019)Michel, Li, Neubig, and
  Pino]{DBLP:journals/corr/abs-1903-06620}
Michel, P., Li, X., Neubig, G., and Pino, J.~M.
\newblock On evaluation of adversarial perturbations for sequence-to-sequence
  models.
\newblock \emph{CoRR}, abs/1903.06620, 2019.
\newblock URL \url{http://arxiv.org/abs/1903.06620}.

\bibitem[Mirman et~al.(2018)Mirman, Gehr, and Vechev]{diffai}
Mirman, M., Gehr, T., and Vechev, M.
\newblock Differentiable abstract interpretation for provably robust neural
  networks.
\newblock In \emph{International Conference on Machine Learning (ICML)}, 2018.
\newblock URL
  \url{https://www.icml.cc/Conferences/2018/Schedule?showEvent=2477}.

\bibitem[Mirman et~al.(2019)Mirman, Singh, and
  Vechev]{DBLP:journals/corr/abs-1903-12519}
Mirman, M., Singh, G., and Vechev, M.~T.
\newblock A provable defense for deep residual networks.
\newblock \emph{CoRR}, abs/1903.12519, 2019.
\newblock URL \url{http://arxiv.org/abs/1903.12519}.

\bibitem[Pavlick et~al.(2015)Pavlick, Rastogi, Ganitkevitch, Van~Durme, and
  Callison-Burch]{pavlick-etal-2015-ppdb}
Pavlick, E., Rastogi, P., Ganitkevitch, J., Van~Durme, B., and Callison-Burch,
  C.
\newblock {PPDB} 2.0: Better paraphrase ranking, fine-grained entailment
  relations, word embeddings, and style classification.
\newblock In \emph{Proceedings of the 53rd Annual Meeting of the Association
  for Computational Linguistics and the 7th International Joint Conference on
  Natural Language Processing (Volume 2: Short Papers)}, pp.\  425--430,
  Beijing, China, July 2015. Association for Computational Linguistics.
\newblock \doi{10.3115/v1/P15-2070}.
\newblock URL \url{https://www.aclweb.org/anthology/P15-2070}.

\bibitem[Pennington et~al.(2014)Pennington, Socher, and
  Manning]{pennington2014glove}
Pennington, J., Socher, R., and Manning, C.~D.
\newblock Glove: Global vectors for word representation.
\newblock In \emph{Empirical Methods in Natural Language Processing (EMNLP)},
  pp.\  1532--1543, 2014.
\newblock URL \url{http://www.aclweb.org/anthology/D14-1162}.

\bibitem[Pruthi et~al.(2019)Pruthi, Dhingra, and
  Lipton]{DBLP:conf/acl/PruthiDL19}
Pruthi, D., Dhingra, B., and Lipton, Z.~C.
\newblock Combating adversarial misspellings with robust word recognition.
\newblock In \emph{Proceedings of the 57th Conference of the Association for
  Computational Linguistics, {ACL} 2019, Florence, Italy, July 28- August 2,
  2019, Volume 1: Long Papers}, pp.\  5582--5591, 2019.
\newblock \doi{10.18653/v1/p19-1561}.
\newblock URL \url{https://doi.org/10.18653/v1/p19-1561}.

\bibitem[Ribeiro et~al.(2018)Ribeiro, Singh, and
  Guestrin]{DBLP:conf/acl/SinghGR18}
Ribeiro, M.~T., Singh, S., and Guestrin, C.
\newblock Semantically equivalent adversarial rules for debugging {NLP} models.
\newblock In \emph{Proceedings of the 56th Annual Meeting of the Association
  for Computational Linguistics, {ACL} 2018, Melbourne, Australia, July 15-20,
  2018, Volume 1: Long Papers}, pp.\  856--865, 2018.
\newblock \doi{10.18653/v1/P18-1079}.
\newblock URL \url{https://www.aclweb.org/anthology/P18-1079/}.

\bibitem[Sakaguchi et~al.(2017)Sakaguchi, Duh, Post, and
  Durme]{DBLP:conf/aaai/SakaguchiDPD17}
Sakaguchi, K., Duh, K., Post, M., and Durme, B.~V.
\newblock Robsut wrod reocginiton via semi-character recurrent neural network.
\newblock In \emph{Proceedings of the Thirty-First {AAAI} Conference on
  Artificial Intelligence, February 4-9, 2017, San Francisco, California,
  {USA}}, pp.\  3281--3287, 2017.
\newblock URL \url{http://aaai.org/ocs/index.php/AAAI/AAAI17/paper/view/14332}.

\bibitem[Shi et~al.(2020)Shi, Zhang, Hsieh, Chang, and
  Huang]{shi2020robustness}
Shi, Z., Zhang, H., Hsieh, C.-J., Chang, K.-W., and Huang, M.
\newblock Robustness verification for transformers.
\newblock In \emph{International Conference on Learning Representations}, 2020.
\newblock URL \url{https://openreview.net/forum?id=BJxwPJHFwS}.

\bibitem[Socher et~al.(2013)Socher, Perelygin, Wu, Chuang, Manning, Ng, and
  Potts]{DBLP:conf/emnlp/SocherPWCMNP13}
Socher, R., Perelygin, A., Wu, J., Chuang, J., Manning, C.~D., Ng, A.~Y., and
  Potts, C.
\newblock Recursive deep models for semantic compositionality over a sentiment
  treebank.
\newblock In \emph{Proceedings of the 2013 Conference on Empirical Methods in
  Natural Language Processing, {EMNLP} 2013, 18-21 October 2013, Grand Hyatt
  Seattle, Seattle, Washington, USA, {A} meeting of SIGDAT, a Special Interest
  Group of the {ACL}}, pp.\  1631--1642, 2013.
\newblock URL \url{https://www.aclweb.org/anthology/D13-1170/}.

\bibitem[Szegedy et~al.(2013)Szegedy, Zaremba, Sutskever, Bruna, Erhan,
  Goodfellow, and Fergus]{szegedy2013intriguing}
Szegedy, C., Zaremba, W., Sutskever, I., Bruna, J., Erhan, D., Goodfellow, I.,
  and Fergus, R.
\newblock Intriguing properties of neural networks.
\newblock \emph{arXiv preprint arXiv:1312.6199}, 2013.

\bibitem[Wang et~al.(2019)Wang, Pei, Liu, and
  Li]{DBLP:journals/corr/abs-1912-10375}
Wang, B., Pei, H., Liu, H., and Li, B.
\newblock Advcodec: Towards {A} unified framework for adversarial text
  generation.
\newblock \emph{CoRR}, abs/1912.10375, 2019.
\newblock URL \url{http://arxiv.org/abs/1912.10375}.

\bibitem[Welbl et~al.(2020)Welbl, Huang, Stanforth, Gowal, Dvijotham, Szummer,
  and Kohli]{welbl2020towards}
Welbl, J., Huang, P.-S., Stanforth, R., Gowal, S., Dvijotham, K.~D., Szummer,
  M., and Kohli, P.
\newblock Towards verified robustness under text deletion interventions.
\newblock In \emph{International Conference on Learning Representations}, 2020.
\newblock URL \url{https://openreview.net/forum?id=SyxhVkrYvr}.

\bibitem[Zhang et~al.(2019{\natexlab{a}})Zhang, Zhou, Miao, and
  Li]{DBLP:conf/acl/ZhangZML19}
Zhang, H., Zhou, H., Miao, N., and Li, L.
\newblock Generating fluent adversarial examples for natural languages.
\newblock In \emph{Proceedings of the 57th Conference of the Association for
  Computational Linguistics, {ACL} 2019, Florence, Italy, July 28- August 2,
  2019, Volume 1: Long Papers}, pp.\  5564--5569, 2019{\natexlab{a}}.
\newblock URL \url{https://www.aclweb.org/anthology/P19-1559/}.

\bibitem[Zhang et~al.(2019{\natexlab{b}})Zhang, Sheng, Alhazmi, and
  LI]{zhang2019adversarial}
Zhang, W.~E., Sheng, Q.~Z., Alhazmi, A., and LI, C.
\newblock Adversarial attacks on deep learning models in natural language
  processing: A survey.
\newblock \emph{arXiv preprint arXiv:1901.06796}, 2019{\natexlab{b}}.

\bibitem[Zhang et~al.(2015)Zhang, Zhao, and LeCun]{DBLP:conf/nips/ZhangZL15}
Zhang, X., Zhao, J.~J., and LeCun, Y.
\newblock Character-level convolutional networks for text classification.
\newblock In \emph{Advances in Neural Information Processing Systems 28: Annual
  Conference on Neural Information Processing Systems 2015, December 7-12,
  2015, Montreal, Quebec, Canada}, pp.\  649--657, 2015.
\newblock URL
  \url{http://papers.nips.cc/paper/5782-character-level-convolutional-networks-for-text-classification}.

\bibitem[Zhao et~al.(2018)Zhao, Dua, and Singh]{DBLP:conf/iclr/ZhaoDS18}
Zhao, Z., Dua, D., and Singh, S.
\newblock Generating natural adversarial examples.
\newblock In \emph{6th International Conference on Learning Representations,
  {ICLR} 2018, Vancouver, BC, Canada, April 30 - May 3, 2018, Conference Track
  Proceedings}, 2018.
\newblock URL \url{https://openreview.net/forum?id=H1BLjgZCb}.

\end{thebibliography}
\bibliographystyle{icml2020}

\clearpage
\appendix

\section{Appendix}
\subsection{Semantics of specifications}
\label{ssec:semantic}

We define the semantics of a specification
$\spec=\{(T_1, \delta_1), \dots, (T_n, \delta_n)\}$
(such that $T_i=(\varphi_i,f_i)$) as follows.
Given a string
$\bfx=x_1\ldots x_m$, 
a string $\bfy$ is in the perturbations space $S(\bfx)$ if:
\begin{enumerate}
    \item there
exists matches $\langle (l_1,r_1),j_1\rangle\ldots \langle (l_k,r_k),j_k\rangle$ (we assume that matches are sorted in ascending order of $l_i$)
such that for every $i\leq k$ we have that $(l_i,r_i)$ is a valid match
of $\varphi_{j_i}$ in $\bfx$;
\item the matches are not overlapping: for every two distinct $i_1$ and $i_2$, 
        $r_{i_1}<l_{i_2}$ or $r_{i_2}<l_{i_1}$;
\item the matches respect the $\delta$ constraints: for every $j'\leq n$,
      $|\{\langle (l_i,r_i),j_i\rangle \mid j_i=j'\}|\leq \delta_{j'}$.
\item the string $\bfy$ is the result of applying an appropriate transformation to each match: if for every $i\leq k$
we have $\bfs_i\in f_{j_i}(x_{l_i}\ldots x_{r_i})$, then
\[
     \bfy = x_1\ldots x_{l_1-1}\bfs_1 x_{r_1+1}\ldots x_{l_k-1}\bfs_k x_{r_k+1} \ldots x_m.
\]
\end{enumerate}

\subsection{Proof of Theorem~\ref{thm:soundness}}
We give the following definition of a convex set:
\begin{definition}{\textbf{Convex set}:}
\label{def:convex}
A set $\mathcal{C}$ is \textbf{convex} if, for all $x$ and $y$ in $\mathcal{C}$, the line segment connecting $x$ and $y$ is included in $\mathcal{C}$. 
\end{definition}

\begin{proof}
We first state and prove the following lemma.
\begin{lemma}
\label{lemma:unknown}
Given a set of points $\{p_0,p_1,\ldots,p_t\}$ and a convex set $\mathcal{C}$ such that $\{p_0,p_1,\ldots,p_t\} \subset \mathcal{C}$. These points define a set of vectors $\overrightarrow{p_0p_1},\overrightarrow{p_0p_2}, \ldots, \overrightarrow{p_0p_t}$.
If a vector $\overrightarrow{p_0p}$ can be represented as a sum weighed by $\alpha_i$:
\begin{align}
    \overrightarrow{p_0p} = \sum_{i=1}^t \alpha_i \cdot \overrightarrow{p_0p_i},
\end{align}
where $\alpha_i$ respect to constraints:
\begin{align}
    \sum_{i=1}^t \alpha_i \le 1 \wedge \forall 1\le i \le t.\; \alpha_i \ge 0,
\end{align}
then the point $p$ is also in the convex set $\mathcal{C}$.
\end{lemma}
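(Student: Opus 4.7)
My plan is to reduce the weighted-sum form of $p$ to a standard convex combination of $\{p_0, p_1, \ldots, p_t\}$, and then to show by induction that any convex combination of finitely many points in a convex set (in the pairwise sense of Definition~\ref{def:convex}) lies in the set.

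First, I would introduce the slack weight $\alpha_0 := 1 - \sum_{i=1}^t \alpha_i$. The hypothesis $\sum_{i=1}^t \alpha_i \leq 1$ together with $\alpha_i \geq 0$ gives $\alpha_0 \geq 0$ and $\sum_{i=0}^t \alpha_i = 1$. Expanding the displayed equation for $\overrightarrow{p_0 p}$ and rearranging yields
\begin{align*}
p \;=\; p_0 + \sum_{i=1}^t \alpha_i (p_i - p_0) \;=\; \Bigl(1 - \sum_{i=1}^t \alpha_i\Bigr) p_0 + \sum_{i=1}^t \alpha_i p_i \;=\; \sum_{i=0}^t \alpha_i p_i,
\end{align*}
so $p$ is a bona fide convex combination of the points $p_0, \ldots, p_t$, all of which lie in $\mathcal{C}$ by assumption.

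Second, I would prove the auxiliary claim that any finite convex combination of points from a convex set (in the sense of Definition~\ref{def:convex}, which only asserts pairwise convexity) remains in the set, by induction on the number $k$ of points carrying nonzero weight. The base case $k = 1$ is immediate: the combination equals a single point already in $\mathcal{C}$. For the inductive step, assume the claim for $k$ and consider a combination $q = \sum_{j=0}^{k} \beta_j q_j$ with $q_j \in \mathcal{C}$, $\beta_j \geq 0$, $\sum_j \beta_j = 1$, and all $\beta_j > 0$. Isolating the last term, I would set $\gamma = \sum_{j=0}^{k-1} \beta_j = 1 - \beta_k$, which is strictly positive (otherwise the combination reduces to the single point $q_k$), and write
\begin{align*}
q \;=\; \gamma \cdot \underbrace{\sum_{j=0}^{k-1} \tfrac{\beta_j}{\gamma} q_j}_{=: q'} \;+\; \beta_k q_k.
\end{align*}
The inner point $q'$ is a convex combination of $k$ points in $\mathcal{C}$ (the coefficients $\beta_j/\gamma$ are nonnegative and sum to one), so $q' \in \mathcal{C}$ by the inductive hypothesis. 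Then $q = \gamma q' + (1-\gamma) q_k$ lies on the segment between $q'$ and $q_k$, and thus in $\mathcal{C}$ by the pairwise convexity of Definition~\ref{def:convex}.

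Applying this auxiliary claim to $p = \sum_{i=0}^t \alpha_i p_i$ with the $p_i \in \mathcal{C}$ gives $p \in \mathcal{C}$, completing the proof. The only nontrivial step is the induction bridging pairwise convexity to general convex combinations; everything else is algebraic bookkeeping on the weights $\alpha_i$ and the introduction of $\alpha_0$.
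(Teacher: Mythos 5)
Your proposal is correct and follows essentially the same route as the paper's proof: an induction on the number of points that peels off the last term, rescales the remaining coefficients by $1-\alpha_{r+1}$ (your $\gamma$), applies the inductive hypothesis, and then invokes the pairwise segment property of Definition~\ref{def:convex}; your preliminary rewriting of $p$ as the convex combination $\sum_{i=0}^t \alpha_i p_i$ via the slack weight $\alpha_0$ is just a cleaner bookkeeping of the same algebra. As a minor bonus, you explicitly handle the degenerate case $\gamma = 0$ (i.e.\ $\alpha_{r+1}=1$), which the paper's inductive step silently divides by.
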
 
\begin{proof}
We prove this lemma by induction on $t$,
\begin{itemize}
    \item Base case: $t=1$, if $\overrightarrow{p_0p} = \alpha_1 \cdot \overrightarrow{p_0p_1}$ and $0 \le \alpha_1 \le 1$, then $p$ is on the segment $p_0p_1$. 
    By the definition of the convex set (Definition~\ref{def:convex}), the segment $p_0p_1$ is inside the convex, which implies $p$ is inside the convex: $p \in p_0p_1 \subseteq \mathcal{C}$.
    \item Inductive step: Suppose the lemma holds for $t=r$. If a vector $\overrightarrow{p_0p}$ can be represented as a sum weighed by $\alpha_i$:
\begin{align}
    \overrightarrow{p_0p} = \sum_{i=1}^{r+1} \alpha_i \cdot \overrightarrow{p_0p_i} \label{eq:inductivestep}
\end{align}
where $\alpha_i$ respect to constraints:
\begin{align}
    \sum_{i=1}^{r+1} \alpha_i \le 1, \label{eq:inductivecons}\\
    \forall 1\le i \le r+1.\; \alpha_i \ge 0. 
\end{align}
We divide the sum in Eq~\ref{eq:inductivestep} into two parts:
\begin{align}
    \overrightarrow{p_0p} &= \sum_{i=1}^{r+1} \alpha_i \cdot \overrightarrow{p_0p_i}\\
    &= (\sum_{i=1}^r \alpha_i \cdot \overrightarrow{p_0p_i}) + \alpha_{r+1} \cdot \overrightarrow{p_0p_{r+1}}\\
    &= (1-\alpha_{r+1})\overrightarrow{p_0p'} + \alpha_{r+1} \cdot \overrightarrow{p_0p_{r+1}} \quad \text{, and} \label{eq:no}\\ 
    \overrightarrow{p_0p'} & = \sum_{i=1}^r \frac{\alpha_i}{1-\alpha_{r+1}} \cdot \overrightarrow{p_0p_i}
\end{align}
Because from Inequality~\ref{eq:inductivecons}, we know that
\[\sum_{i=1}^r \alpha_i \le 1 - \alpha_{r+1},\]
which is equivalent to 
\[\sum_{i=1}^r \frac{\alpha_i}{1-\alpha_{r+1}} \le 1. \]
This inequality enables the inductive hypothesis, and we know point $p'$ is in the convex set $\mathcal{C}$.
From Eq~\ref{eq:no}, we know that the point $p$ is on the segment of $p'p_{r+1}$, since both two points $p'$ and $p_{r+1}$ are in the convex set $\mathcal{C}$, then the point $p$ is also inside the convex set $\mathcal{C}$.
\end{itemize}
\end{proof}

To prove Theorem~\ref{thm:soundness}, we need to show that every \perturbedsample{} $\bfy \in \spec(\bfx)$ lies inside the convex hull of $\mathit{abstract}(\spec,\bfx)$.

\textbf{We first describe the \perturbedsample{} $\bfy$.} The \perturbedsample{} $\bfy$ as a string is defined in the semantics of specification $\spec$ (see the \cref{ssec:semantic}).
In the rest of this proof, we use a function $E: \alphabet^m \mapsto \mathbb{R}^{m\times d}$ mapping from a string with length $m$ to a point in $m\times d$-dimensional space, e.g., $E(\bfy)$ represents the point of the \perturbedsample{} $\bfy$ in the embedding space.
We use $\bfx_{\langle (l,r),j,\bfs \rangle}$ to represent the string perturbed by a transformation $T_j=(\varphi_j,f_j)$ such that $(l,r)$ is a valid match of $\varphi_j$ and $\bfs \in f_j(x_l,\ldots, x_r)$. Then 
\[
     \bfx_{\langle (l,r),j,\bfs \rangle} = x_1\ldots x_{l-1}\bfs x_{r+1}\ldots x_m.
\]
We further define $\Delta_{\langle (l,r),j,\bfs \rangle}$ as the vector $E(\bfx_{\langle (l,r),j,\bfs \rangle})-E(\bfx) = \overrightarrow{E(\bfx)E(\bfx_{\langle (l,r),j,\bfs \rangle})}$:
\[
\Delta_{\langle (l,r),j,\bfs \rangle} = (\underbrace{0, \ldots, 0}_{(l-1)\times d},E(\bfs)-E(x_l\ldots x_r),\underbrace{0,\ldots, 0}_{(m-r)\times d}).
\]
A \perturbedsample{} $\bfy$ defined by matches $\langle (l_1,r_1),j_1\rangle\ldots \langle (l_k,r_k),j_k\rangle$ and for every $i\leq k$
we have $\bfs_i\in f_{j_i}(x_{l_i}\ldots x_{r_i})$, then
\[
     \bfy = x_1\ldots x_{l_1-1}\bfs_1 x_{r_1+1}\ldots x_{l_k-1}\bfs_k x_{r_k+1} \ldots x_m.
\]
The matches respect the $\delta$ constraints: for every $j' \le n$, $|\{\langle (l_i,r_i),j_i,\bfs_i\rangle \mid j_i=j'\}|\leq \delta_{j'}$.
Thus, the size of the matches $k$ also respect the $\delta$ constraints:
\begin{align}
    k = \sum_{j'=1}^n |\{\langle (l_i,r_i),j_i,\bfs_i\rangle \mid j_i=j'\}| \le \sum_{j'=1}^n \delta_{j'}. \label{eq:deltacons}
\end{align}

In the embedding space,
\begin{align*}
\overrightarrow{E(\bfx)E(\bfy)} = (\underbrace{0, \ldots, 0}_{(l_1-1)\times d},E(\bfs_1)-E(x_{l_1}\ldots x_{r_1}),\\
0,\ldots,0,E(\bfs_k)-E(x_{l_k}\ldots x_{r_k}), \underbrace{0, \ldots, 0}_{(m-r_k)\times d}).    
\end{align*}
Thus, we can represent $\overrightarrow{E(\bfx)E(\bfy)}$ using $\Delta_{\langle (l,r),j,\bfs \rangle}$:
\begin{align}
    \overrightarrow{E(\bfx)E(\bfy)} = \sum_{i=1}^k \Delta_{\langle (l_i,r_i),j_i,\bfs_i \rangle}. \label{eq:xtoy}
\end{align}

\textbf{We then describe the convex hull of $\mathit{abstract}(\spec,\bfx)$.} The convex hull of $\mathit{abstract}(\spec,\bfx)$ is constructed by a set of points $E(\bfx)$ and $E(\bfv_{\langle (l,r),i,\bfs \rangle})$, where points $E(\bfv_{\langle (l,r),i,\bfs \rangle})$ are computed by:
\[
E(\bfv_{\langle (l,r),j,\bfs \rangle}) \triangleq E(\bfx) + (\sum_{i=1}^n \delta_i) (E(\bfx_{\langle (l,r),j,\bfs \rangle})-E(\bfx)) .
\]
Alternatively, using the definition of $\Delta_{\langle (l,r),j,\bfs \rangle}$, we get
\begin{align}
    \overrightarrow{E(\bfx)E(\bfv_{\langle (l,r),j,\bfs \rangle})} = (\sum_{i=1}^n \delta_i) \Delta_{\langle (l,r),j,\bfs \rangle}. \label{eq:xtoz}
\end{align}

\textbf{We then prove the Theorem~\ref{thm:soundness}}.
To prove $E(\bfy)$ lies in the convex hull of $\mathit{abstract}(\spec,\bfx)$, we need to apply Lemma~\ref{lemma:unknown}.
Notice that a convex hull by definition is also a convex set.
Because from Eq~\ref{eq:xtoy}, we have
\begin{align*}
    \overrightarrow{E(\bfx)E(\bfy)} = \sum_{i=1}^k \Delta_{\langle (l_i,r_i),j_i,\bfs_i \rangle}\\
    = \frac{1}{\sum_{i=1}^n \delta_i} \sum_{i=1}^k (\sum_{i'=1}^n \delta_{i'}) \Delta_{\langle (l_i,r_i),j_i,\bfs_i \rangle}.
\end{align*}
We can use Eq~\ref{eq:xtoz} into the above equation, and have 
\begin{align*}
    = \frac{1}{\sum_{i=1}^n \delta_i} \sum_{i=1}^k  \overrightarrow{E(\bfx)E(\bfv_{\langle (l_i,r_i),j_i,\bfs_i \rangle})}\\
    = \sum_{i=1}^k (\frac{1}{\sum_{i=1}^n \delta_i}) \cdot \overrightarrow{E(\bfx)E(\bfv_{\langle (l_i,r_i),j_i,\bfs_i \rangle})}.
\end{align*}
To apply Lemma~\ref{lemma:unknown}, we set 
\[\alpha_i = \frac{1}{\sum_{j=1}^n \delta_{j}}. \]
Using Inequality~\ref{eq:deltacons} on
\begin{align}
    \alpha_i = \frac{1}{\sum_{j=1}^n \delta_{j}} \ge 0, \label{eq:lemmacon1}
\end{align}
we get
\begin{align}
\sum_{i=1}^k \alpha_i = \sum_{i=1}^k \frac{1}{\sum_{j=1}^n \delta_{j}} = \frac{k}{\sum_{j=1}^n \delta_{j}} \le 1. \label{eq:lemmacon2}
\end{align}
The constraints in Inequality~\ref{eq:lemmacon1} and Inequality~\ref{eq:lemmacon2} enable Lemma~\ref{lemma:unknown}, and by applying Lemma~\ref{lemma:unknown}, we know that point $E(\bfy)$ is inside the convex hull of $\mathit{abstract}(\spec,\bfx)$.
\end{proof}

\subsection{Details of Experiment Setup}

For AG dataset, we trained a smaller character-level model than the one used in ~\citet{DBLP:conf/emnlp/HuangSWDYGDK19}.
We followed the setup of the previous work: use lower-case letters only and truncate the inputs to have at most 300 characters. 
The model consists of an embedding layer of dimension 64, a 1-D convolution layer with 64 kernels of size 10, a ReLU layer, a 1-D average pooling layer of size 10, and two fully-connected layers with ReLUs of size 64, and a linear layer. 
We randomly initialized the character embedding and updated it during training. 

For SST2 dataset, we trained the same word-level model as the one used in~\citet{DBLP:conf/emnlp/HuangSWDYGDK19}.
The model consists of an embedding layer of dimension 300, a 1-D convolution layer with 100 kernels of size 5, a ReLU layer, a 1-D average pooling layer of size 5, and a linear layer. 
We used the pre-trained Glove embedding~\cite{pennington2014glove} with dimension 300 and fixed it during training.

For SST2 dataset, we trained the same character-level model as the one used in~\citet{DBLP:conf/emnlp/HuangSWDYGDK19}.
The model consists of an embedding layer of dimension 150, a 1-D convolution layer with 100 kernels of size 5, a ReLU layer, a 1-D average pooling layer of size 5, and a linear layer. 
We randomly initialized the character embedding and updated it during training. 

For all models, we used Adam~\cite{DBLP:journals/corr/KingmaB14} with a learning rate of $0.001$ for optimization and applied early stopping policy with patience $5$.

\subsubsection{Perturbations}
We provide the details of the string transformations we used:
\begin{itemize}
    \item $\Tsubchar{}$,  $\Tinschar{}$: We allow each character substituting to one of its adjacent characters on the QWERTY keyboard.
    \item $\Tdelword{}$: We choose $\{$\emph{and}, \emph{the}, \emph{a}, \emph{to}, \emph{of}$\}$ as our stop words set. 
    \item $\Tsubword{}$: We use the synonyms provided by PPDB~\cite{pavlick-etal-2015-ppdb}. We allow each word substituting to its closest synonym when their part-of-speech tags are also matched.
\end{itemize}

\subsubsection{Baseline}
\textbf{Random augmentation}
performs adversarial training using a weak adversary that simply picks a random \perturbedsample{} from the perturbation space.
For a specification $\spec=\{(T_1,\delta_1), \ldots, (T_n, \delta_n)\}$,
we produce $\bfz$ by 
uniformly sampling one  string $\bfz_1$ from a string transformation $(T_1,\delta_1)$ and passing it to the next transformation $(T_2,\delta_2)$, where we then sample a new string $\bfz_2$, and so on until we have exhausted all transformations.
The objective function is the following:
\begin{align}\argmin_{\params} \mathop{\mathbb{E}}_{(\bfx,y)\sim \mathcal{D}} \; (\mathcal{L}(x,y,\params) + \max_{\bfz\in\perturbe(\bfx)} \; \mathcal{L}(\bfz,y,\params)) \label{eq: realtraining}
\end{align}

\textbf{HotFlip augmentation} performs adversarial training using the HotFlip~\cite{DBLP:conf/acl/EbrahimiRLD18} attack to find $\bfz$ and solve the inner maximization problem.
The objective function is the same as Eq~\ref{eq: realtraining}.

\textbf{\aat{}} adopts a curriculum-based training method~\cite{DBLP:conf/emnlp/HuangSWDYGDK19, DBLP:conf/iccv/GowalDSBQUAMK19} that uses a hyperparameter $\lambda$ to weigh between normal loss and maximization objective in \cref{eq:advtrain}.
We linearly increase the hyperparameter $\lambda$ during training. 
\begin{align*}
\argmin_{\params} & \mathop{\mathbb{E}}_{(\bfx,y)\sim \mathcal{D}} \; ((1 - \lambda)\mathcal{L}(x,y,\params) + \\ &\lambda \max_{\bfz\in \emph{augment}_k(\spec_\emph{aug},\bfx)} \mathcal{L}(\emph{abstract}(\spec_\emph{abs}{\bfz}),y,\params)).
\end{align*}
Also, we set $k$ in $\emph{augment}_k$ to $2$, which means we select $2$ \perturbedsample{s} to abstract.

\begin{figure*}[ht]
\vskip 0.1in
\begin{center}
\subfigure[$\{(\Tinsword{}, \delta_1),(\Tsubword{}, 2)\}$]{\includegraphics[width=0.49\textwidth]{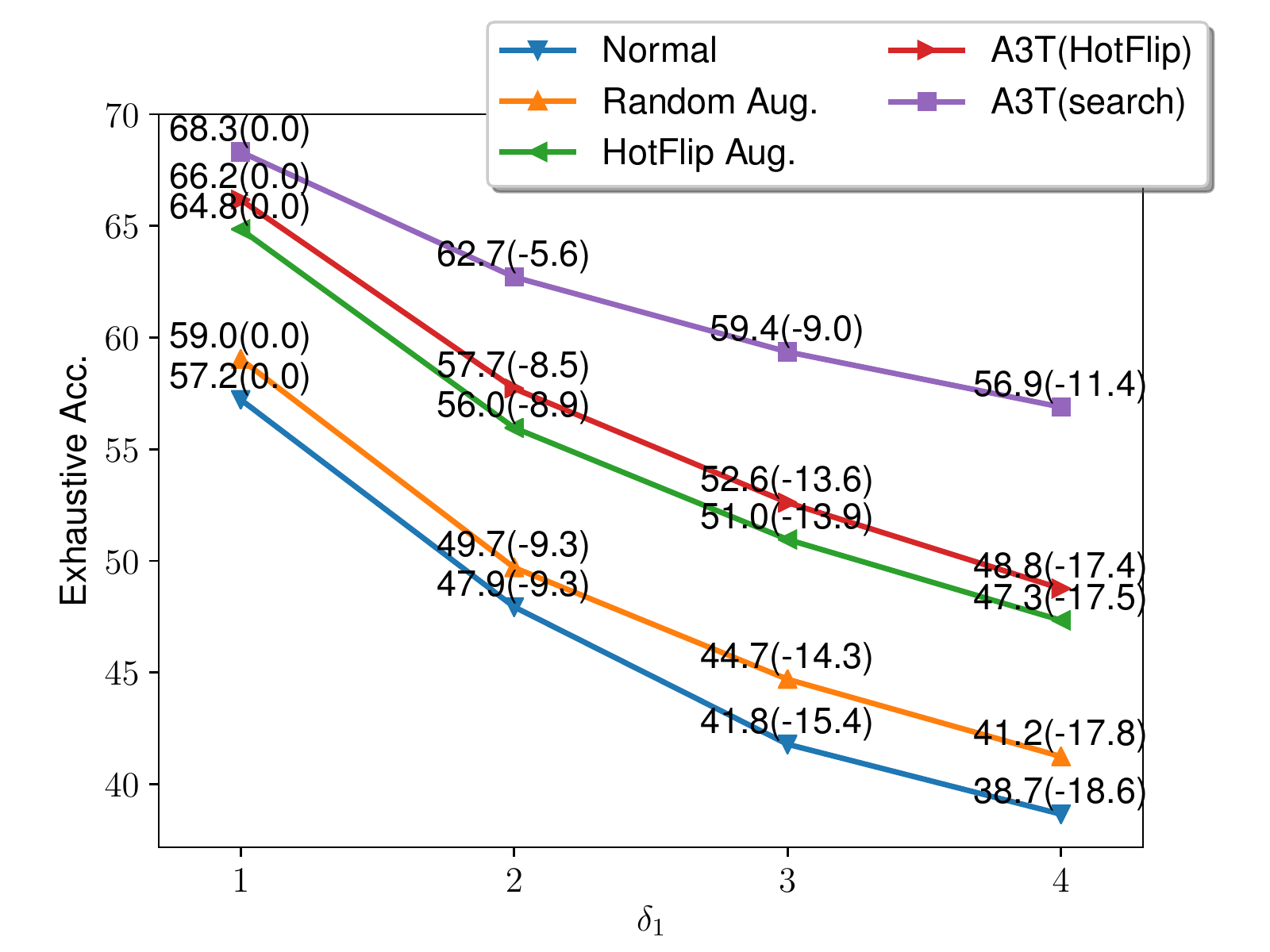}\label{fig:inssub_left}}
\subfigure[$\{(\Tinsword{}, 2),(\Tsubword{},  \delta_2)\}$]{\includegraphics[width=0.49\textwidth]{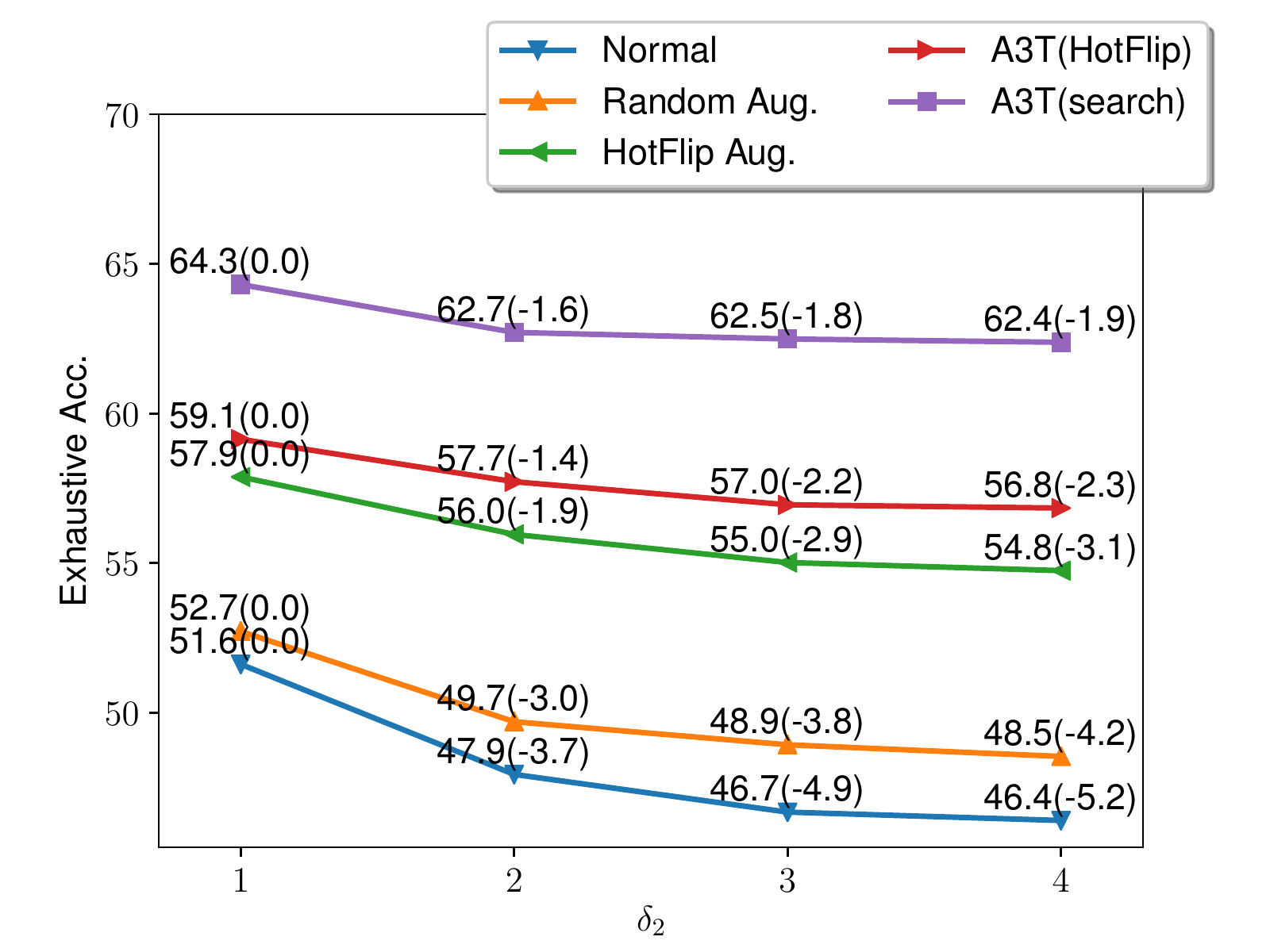}\label{fig:inssub_right}}
\caption{The \oracle{} accuracy of $\{(\Tinsword{}, \delta_1), (\Tsubword{}, \delta_2)\}$, varying the parameters $\delta_1$ (left) and $\delta_2$ (right) between 1 and 4.}
\label{fig:inssub}
\end{center}
\vskip -0.1in
\end{figure*}

\begin{figure*}[ht]
\vskip 0.1in
\begin{center}
\subfigure[$\{(\Tdelword{}, \delta_1),(\Tinsword{}, 2),(\Tsubword{}, 2)\}$]{\includegraphics[width=0.49\textwidth]{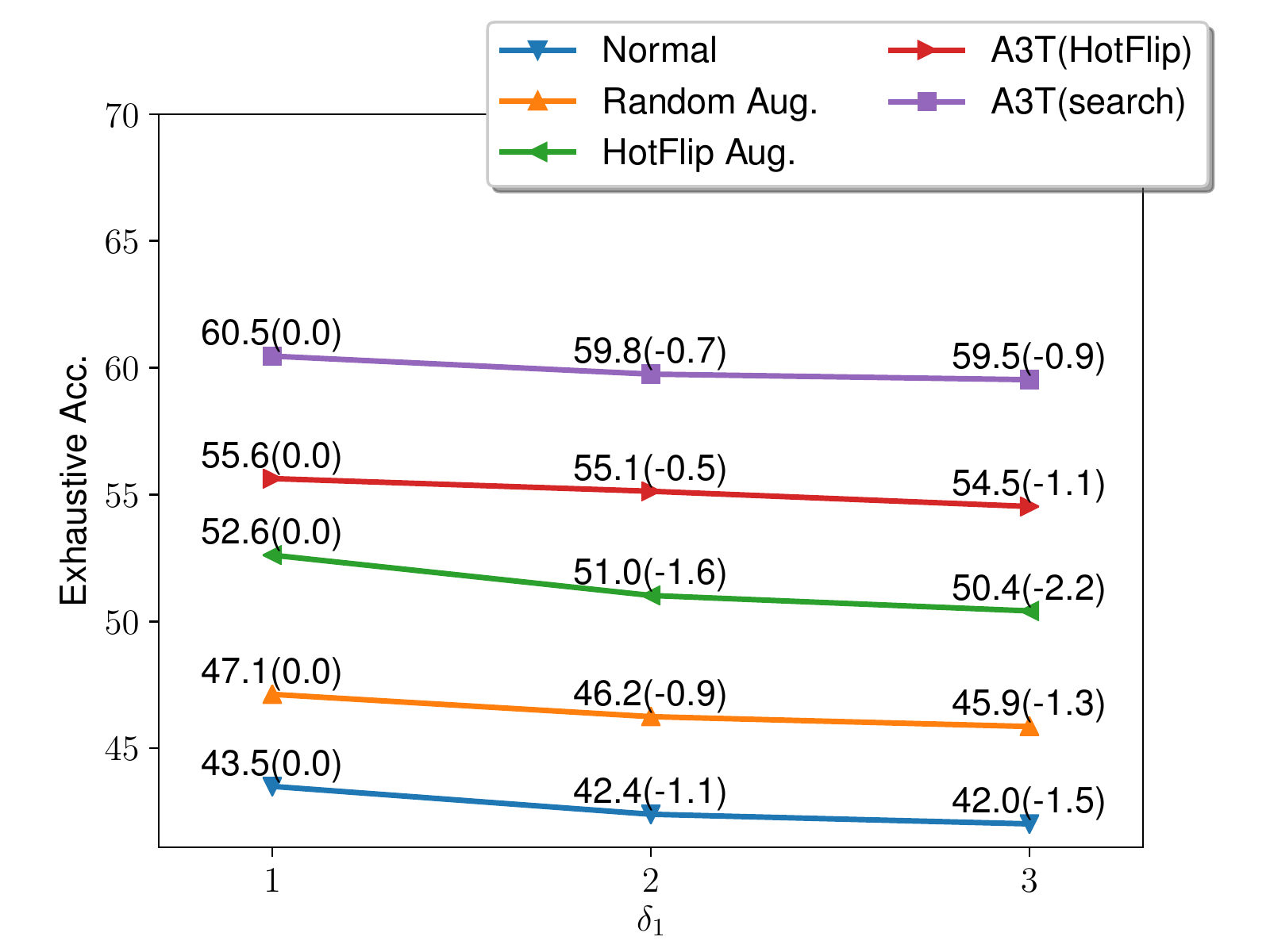}\label{fig:delinssub_left}}
\subfigure[$\{(\Tdelword{}, 2),(\Tinsword{}, \delta_2),(\Tsubword{}, 2)\}$]{\includegraphics[width=0.49\textwidth]{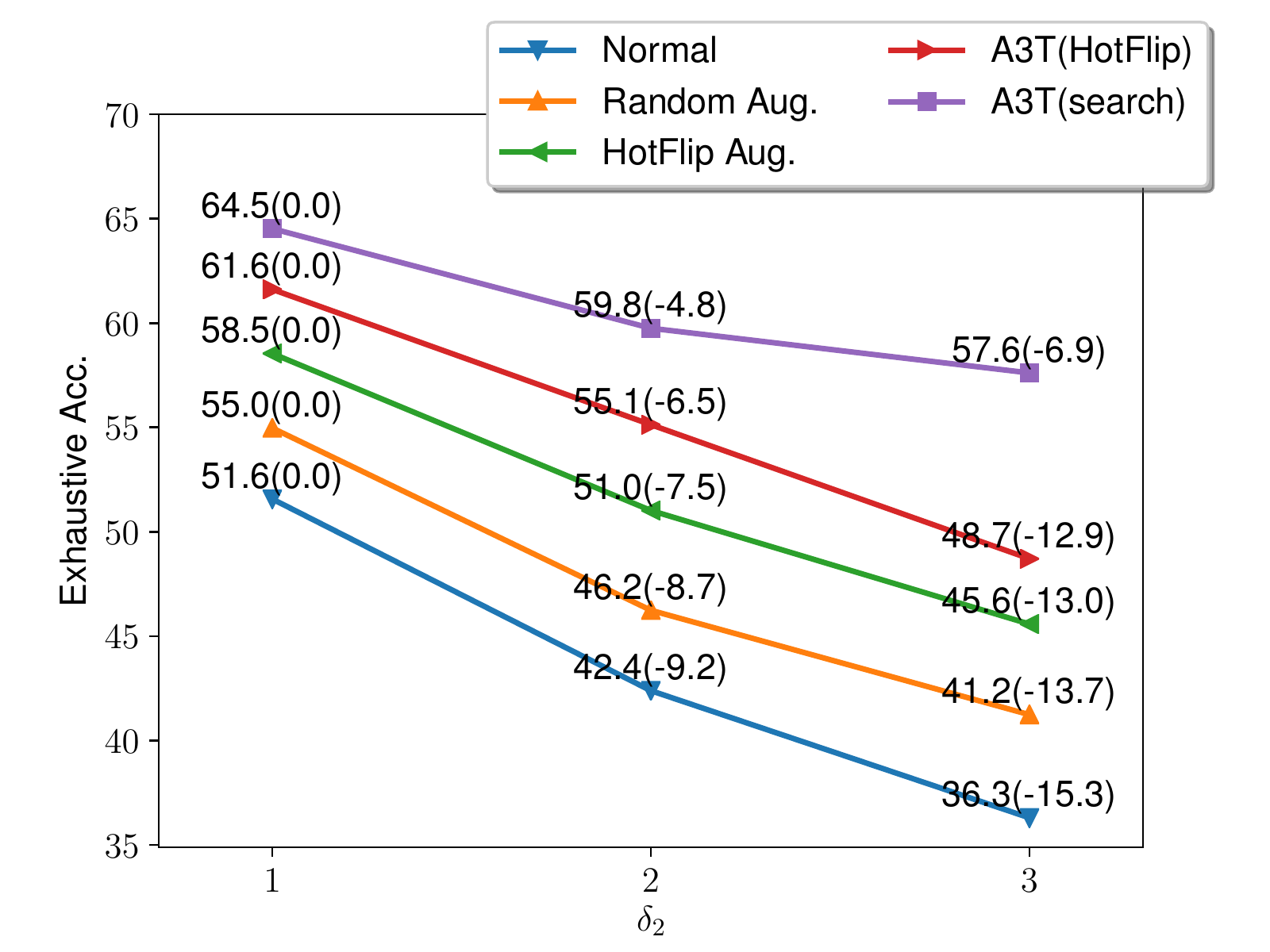}\label{fig:delinssub_mid}}
\subfigure[$\{(\Tdelword{}, 2),(\Tinsword{}, 2),(\Tsubword{}, \delta_3)\}$]{\includegraphics[width=0.49\textwidth]{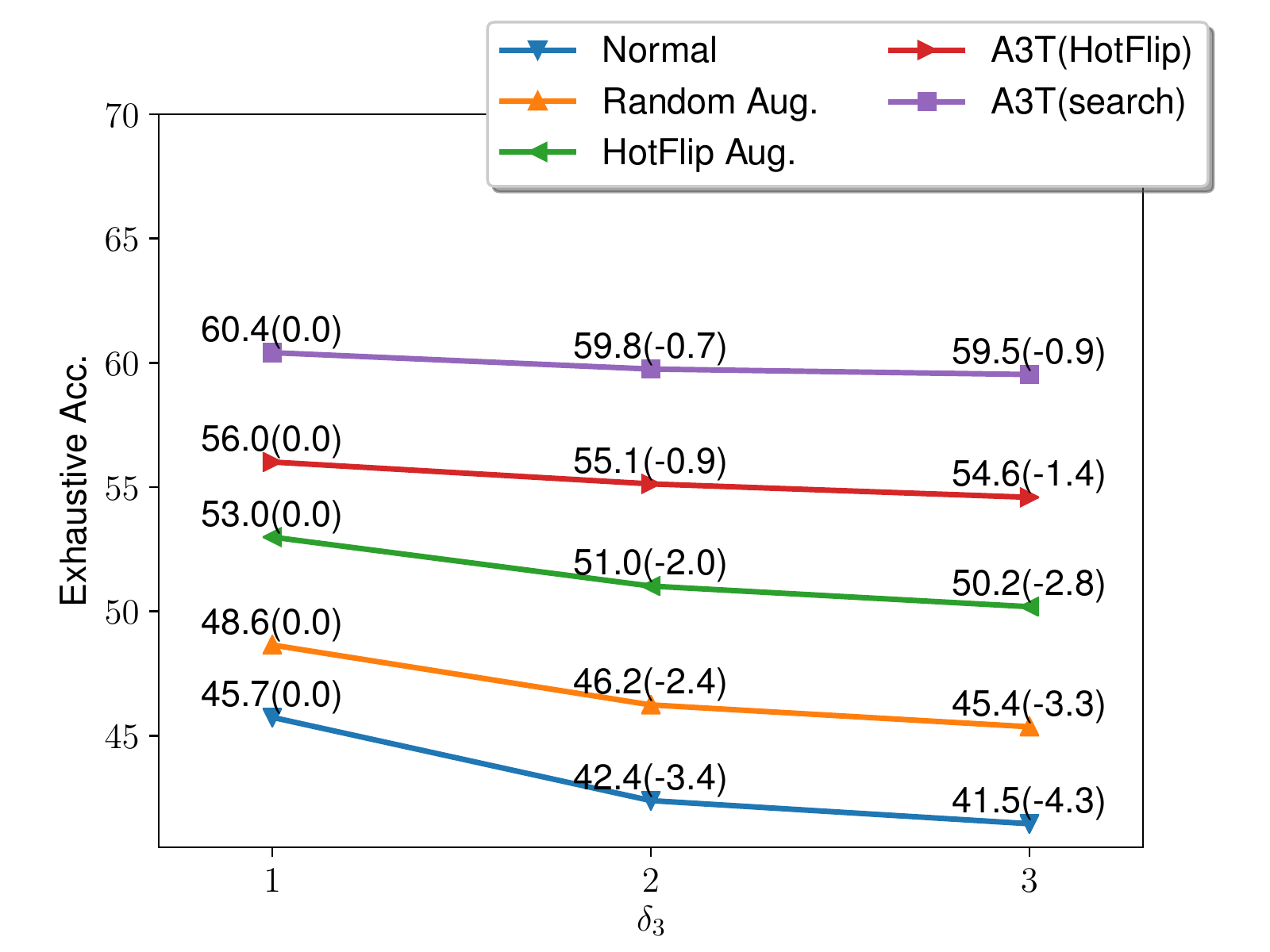}\label{fig:delinssub_right}}
\caption{The \oracle{} accuracy of $\{(\Tdelword{}, \delta_1),(\Tinsword{}, \delta_2),(\Tsubword{}, \delta_3)\}$, varying the parameters $\delta_1$ (left), $\delta_2$ (middle), and $\delta_3$ (right) between 1 and 3.}
\label{fig:delinssub}
\end{center}
\vskip -0.1in
\end{figure*}

\subsubsection{Evaluation Results}
\paragraph{RQ2: Effects of size of the perturbation space}
In Figure~\ref{fig:inssub}, we fix the word-level model \aat(search) trained on $\{(\Tinsword{}, 2),(\Tsubword{}, 2)\}$.
Then, we test this model's exhaustive accuracy on
$\{(\Tinsword{}, \delta_1),(\Tsubword{}, 2)\}$ (Figure~\ref{fig:inssub_left}) and
$\{(\Tinsword{}, 2),(\Tsubword{}, \delta_2)\}$
(Figure~\ref{fig:inssub_right}),
where we vary the parameters $\delta_1$ and $\delta_2$ between 1 and 4,
increasing the size of the perturbation space.
The \oracle{} accuracy of \advab{} and \enumab{} decreases by $17.4\%$ and $11.4\%$, respectively, when increasing $\delta_1$ from $1$ to $4$, and decreases by $2.3\%$ and $1.9\%$, respectively, when increasing $\delta_2$ from $1$ to $4$.
All other techniques result in larger decreases in \oracle{} accuracy (${\ge}17.5\%$ in $\{(\Tinsword{}, \delta_1),(\Tsubword{}, 2)\}$ and ${\ge}3.1\%$ in $\{(\Tinsword{}, 2),(\Tsubword{}, \delta_2)\}$).

In Figure~\ref{fig:delinssub}, we fix the word-level model \aat(search) trained on $\{(\Tdelword{}, 2),(\Tinsword{}, 2),(\Tsubword{}, 2)\}$.
Then, we test this model's exhaustive accuracy on
$\{(\Tdelword{}, \delta_1),(\Tinsword{}, 2),(\Tsubword{}, 2)\}$ (Figure~\ref{fig:delinssub_left}),
$\{(\Tdelword{}, 2),(\Tinsword{}, \delta_2),(\Tsubword{}, 2)\}$ (Figure~\ref{fig:delinssub_mid}),
and $\{(\Tdelword{}, 2),(\Tinsword{}, 2),(\Tsubword{}, \delta_3)\}$
(Figure~\ref{fig:delinssub_right}),
where we vary the parameters $\delta_1$, $\delta_2$ and $\delta_3$ between 1 and 3,
increasing the size of the perturbation space.
The \oracle{} accuracy of \advab{} and \enumab{} decreases by $1.1\%$ and $0.9\%$, respectively, when increasing $\delta_1$ from $1$ to $3$, decreases by $12.9\%$ and $6.9\%$, respectively, when increasing $\delta_2$ from $1$ to $3$, and decreases by $1.4\%$ and $0.9\%$, respectively, when increasing $\delta_3$ from $1$ to $3$.
All other techniques result in larger decreases in \oracle{} accuracy (${\ge}2.2\%$ in $\{(\Tdelword{}, \delta_1),(\Tinsword{}, 2),(\Tsubword{}, 2)\}$, ${\ge}13.0\%$ in $\{(\Tdelword{}, 2),(\Tinsword{}, \delta_2),(\Tsubword{}, 2)\}$, and ${\ge}2.8\%$ in $\{(\Tdelword{}, 2),(\Tinsword{}, 2),(\Tsubword{}, \delta_3)\}$).





\end{document}